\PassOptionsToPackage{round}{natbib}
\documentclass{article}

\usepackage[preprint]{neurips_2026}

\usepackage[utf8]{inputenc}
\usepackage[T1]{fontenc}
\usepackage{url}
\usepackage{booktabs}
\usepackage{amsfonts}
\usepackage{amsmath}
\usepackage{amssymb}
\usepackage{amsthm}
\usepackage{nicefrac}
\usepackage{microtype}
\usepackage{xcolor}
\usepackage{algorithm}
\usepackage{algorithmic}
\usepackage{graphicx}
\usepackage{float}
\usepackage{multirow}
\usepackage{pifont}
\usepackage{hyperref}
\usepackage{listings}
\definecolor{codebg}{gray}{0.96}
\definecolor{codegreen}{rgb}{0.0, 0.5, 0.0}
\definecolor{codepurple}{rgb}{0.58, 0.0, 0.82}
\definecolor{codeorange}{rgb}{0.8, 0.36, 0.0}
\definecolor{codeblue}{rgb}{0.0, 0.33, 0.71}
\definecolor{codeteal}{rgb}{0.0, 0.45, 0.45}
\definecolor{codered}{rgb}{0.75, 0.0, 0.0}
\lstdefinestyle{code}{
  language=Python,
  basicstyle=\ttfamily\footnotesize,
  backgroundcolor=\color{codebg},
  keywordstyle=\color{codepurple}\bfseries,
  keywordstyle={[2]\color{codeblue}\bfseries},
  commentstyle=\color{codegreen}\itshape,
  stringstyle=\color{codeorange},
  numberstyle=\tiny\color{gray},
  morekeywords={[2]True,False,None},
  emph={[1]jnp,jax,torch,F,nn,lax},
  emphstyle={[1]\color{codeblue}},
  emph={[2]softmax,log_softmax,stop_gradient,detach,mean,std,where,log,sum},
  emphstyle={[2]\color{codeteal}},
  emph={[3]tpo_skill,tpo_target},
  emphstyle={[3]\color{codered}},
  literate=
    {1e-6}{{{\color{codeteal}1e-6}}}4
    {1e-8}{{{\color{codeteal}1e-8}}}4
    {(-1)}{{({\color{codeteal}-1})}}4
    {, -1)}{{, {\color{codeteal}-1})}}5
    {, -1,}{{, {\color{codeteal}-1},}}5,
  numbers=left,
  numbersep=8pt,
  frame=lines,
  framesep=2mm,
  breaklines=true,
  showstringspaces=false,
  keepspaces=true,
  columns=fullflexible,
  upquote=true,
}
\usepackage{tikz}
\usetikzlibrary{positioning,calc,arrows.meta}

\definecolor{bestcol}{RGB}{46,160,67}
\definecolor{medcol}{RGB}{245,140,32}
\definecolor{badcol}{RGB}{221,60,60}

\tikzset{
  >={Latex[length=2mm]},
  heading/.style={font=\bfseries},
  subheading/.style={font=\scriptsize, text=black!65},
  annot/.style={font=\scriptsize, text=black!70, align=center},
  fitbox/.style={
    draw, rounded corners=3pt, thick, fill=gray!6,
    minimum height=0.7cm, align=center, inner sep=4pt,
    font=\scriptsize\bfseries
  },
  flow/.style={->, thick},
}
\newtheorem{proposition}{Proposition}

\title{Target Policy Optimization}

\author{%
  Jean Kaddour
}

\begin{document}

\maketitle

\begin{figure}[H]
\centering
\includegraphics[width=\textwidth]{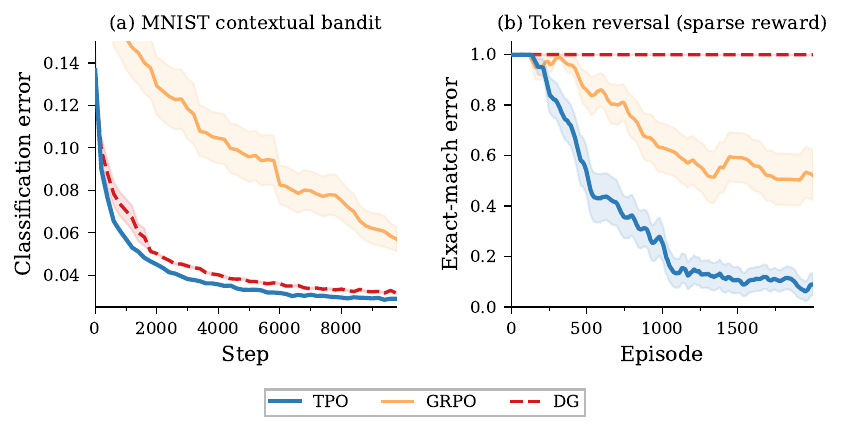}
\caption{\textbf{TPO matches baselines on easy tasks and outperforms them under sparse reward.} (a)~On an MNIST contextual bandit with dense reward, TPO converges slightly faster than GRPO and DG. (b)~On a sparse-reward token-reversal task (reward only at end of sequence), GRPO and DG stall near random while TPO solves the task. Both panels show mean $\pm$ s.e.\ over 20 seeds.}
\label{fig:hero}
\end{figure}

    \begin{abstract}
        In RL, given a prompt, we sample a group of completions from a model and score them. Two questions follow: which completions should gain probability mass, and how should the parameters move to realize that change? Standard policy-gradient methods answer both at once, so the update can overshoot or undershoot depending on the learning rate, clipping, and other optimizer choices. We introduce \emph{Target Policy Optimization} (TPO), which separates the two questions. Given scored completions, TPO constructs a target distribution $q_i \propto p_i^{\,\mathrm{old}} \exp(u_i)$ and fits the policy to it by cross-entropy. The loss gradient on sampled-completion logits is $p^\theta - q$, which vanishes once the policy matches the target. On tabular bandits, transformer sequence tasks, and billion-parameter LLM RLVR, TPO matches PG, PPO, GRPO, and DG on easy tasks and substantially outperforms them under sparse reward. Code is available at \url{https://github.com/JeanKaddour/tpo}.         \end{abstract}

\section{Introduction}
\label{sec:intro}

Consider a prompt for which we sample a small group of candidate completions from a model and score them. We want to shift probability mass toward the better completions. Standard policy-gradient methods entangle the desired redistribution with the optimizer mechanics that realize it. This coupling can make learning fragile, especially when reward is sparse (Figure~\ref{fig:hero}).

A natural fix is to decouple the two questions: first construct a target distribution that encodes the desired redistribution, then fit the policy to it. This reweight-then-fit idea dates to \citet{dayan1997using} and has been instantiated by REPS \citep{peters2010reps} and MPO \citep{abdolmaleki2018mpo}, but those methods require learned Q-functions and constrained optimization over action spaces.

We propose \emph{Target Policy Optimization} (TPO), which applies the same principle to the finite candidate sets used in group-based RL. In this setting, the target distribution is available in closed form, without a critic or dual optimization. Given the probabilities $p_i^{\mathrm{old}}$ assigned by the behavior policy and standardized scores $u_i$, TPO constructs $q_i \;\propto\; p_i^{\mathrm{old}} \exp(u_i),$ then fits the policy to $q$ by cross-entropy. The gradient vanishes exactly when the policy matches the target.

We evaluate TPO on exact tabular bandits, MNIST contextual bandits, sparse-reward transformer tasks, and LLM RLVR. It matches policy-gradient baselines on easier tasks and outperforms them where reward is sparse.

\section{Target Policy Optimization}
\label{sec:tpo}

Let $x$ denote a context (e.g.\ a state or prompt). For each context, we sample $K$ candidates $y_1,\dots,y_K \sim \pi_{\text{old}}(\cdot \mid x)$ and score them with a scalar scorer $S$. In our on-policy experiments, $\pi_{\text{old}}$ is simply the rollout-time snapshot of the current policy. We standardize the raw scores $s_i = S(x, y_i)$ within each group to obtain $u_i = [\operatorname{standardize}(s)]_i$, mapping the zero-variance case to $u = 0$ (Appendix~\ref{app:whitening}).

Let $\ell_i^\theta = \log \pi_\theta(y_i \mid x)$ denote the log-probability the current policy assigns to candidate $i$. The policy over the group is
\begin{equation}
\label{eq:restricted}
p_i^\theta
=
\frac{\exp(\ell_i^\theta)}
{\sum_{j=1}^K \exp(\ell_j^\theta)}.
\end{equation}
Writing $p_i^{\text{old}}$ for the same quantity under $\pi_{\text{old}}$, frozen at rollout time, we tilt this distribution toward higher-scoring candidates to form the target
\begin{equation}
\label{eq:target}
q_i
=
\frac{p_i^{\text{old}} \exp(u_i / \eta)}
{\sum_{j=1}^K p_j^{\text{old}} \exp(u_j / \eta)},
\end{equation}
where $\eta > 0$ is a temperature (we use $\eta = 1$ throughout; Appendix~\ref{app:temperature} shows this is robust).

We fit the policy to this target by minimizing the cross-entropy
\begin{equation}
\label{eq:loss}
\mathcal{L}_{\text{TPO}}(\theta)
=
-\sum_{i=1}^K q_i \log p_i^\theta,
\end{equation}
treating $q$ as fixed. The loss gradient satisfies $\partial \mathcal{L}/\partial \ell_i^\theta = p_i^\theta - q_i$, so gradient descent moves in direction $q_i - p_i^\theta$ and vanishes once the policy matches the target.

In the on-policy setting, the full update takes a few lines of code (Figure~\ref{fig:code}). If rollouts are reused for additional optimization epochs, $q$ stays frozen while the $\log p$ term is recomputed under $\theta$.

\paragraph{Why standardize.} The target (Eq.~\ref{eq:target}) exponentiates the scores, so groups with the same ranking but different numerical spread would produce very different targets. For example, $(1, 0, -1)$ and $(100, 0, -100)$ express the same ordering, but exponentiating $(100, 0, -100)$ makes the target nearly deterministic while $(1, 0, -1)$ yields a gentle tilt. Standardization makes the update depend on relative within-group performance rather than arbitrary score units, and largely removes the need to tune $\eta$.

\paragraph{KL-regularized interpretation.} The target $q$ is equivalently the unique solution of
\begin{equation}
\label{eq:regularized-improvement}
q
=
\arg\max_{r \in \Delta^{K-1}}
\left\{
\sum_{i=1}^K r_i u_i
- \eta\,\mathrm{KL}(r \,\|\, p^{\text{old}})
\right\},
\end{equation}
where $\Delta^{K-1}$ is the simplex over the sampled candidates.

\begin{proposition}
\label{prop:tpo-characterization}
Assume $p_i^{\text{old}} > 0$ for every sampled candidate. Then the target in Eq.~\ref{eq:target} is the unique maximizer of Eq.~\ref{eq:regularized-improvement}. Furthermore, treating $q$ as fixed, the cross-entropy loss in Eq.~\ref{eq:loss} satisfies $\nabla_{\ell^\theta}\mathcal{L}_{\text{TPO}} = p^\theta - q$, so the unique stationary distribution over the sampled candidates is $p^\theta = q$.
\end{proposition}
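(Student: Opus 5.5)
The proof has two independent parts: the variational characterization of $q$, and the gradient and stationarity computation for the cross-entropy loss.

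\textbf{Variational characterization.} I would rewrite the objective of Eq.~\ref{eq:regularized-improvement} as a single negative KL divergence. Let $Z=\sum_j p_j^{\text{old}}\exp(u_j/\eta)$, which is positive and finite because every $p_i^{\text{old}}>0$. Then $\log q_i = \log p_i^{\text{old}} + u_i/\eta - \log Z$, so for any $r\in\Delta^{K-1}$,
\[
\sum_i r_i u_i - \eta\,\mathrm{KL}(r\,\|\,p^{\text{old}})
= \eta\sum_i r_i\Bigl(\tfrac{u_i}{\eta} - \log\tfrac{r_i}{p_i^{\text{old}}}\Bigr)
= \eta\log Z - \eta\,\mathrm{KL}(r\,\|\,q).
\]
This uses the convention $0\log 0=0$, and it also uses $\sum_i r_i=1$ to pull out the $\log Z$ term. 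By Gibbs' inequality, $\mathrm{KL}(r\|q)\ge 0$, with equality if and only if $r=q$. Since $\eta>0$, it follows that $q$ is the unique maximizer. The positivity assumption is what keeps $\mathrm{KL}(r\|p^{\text{old}})$ finite on the whole simplex and makes $q$ have full support, so this identity is legitimate everywhere.

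\textbf{Gradient.} I would substitute $\log p_i^\theta=\ell_i^\theta-\log\sum_j e^{\ell_j^\theta}$ to get $\mathcal{L}=-\sum_i q_i\ell_i^\theta+\bigl(\sum_i q_i\bigr)\log\sum_j e^{\ell_j^\theta}$. Using $\sum_i q_i=1$ and differentiating with respect to $\ell_k^\theta$ gives $-q_k+p_k^\theta$, which is the claimed formula.

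\textbf{Stationarity.} A stationary point in $\ell$-space requires $p^\theta=q$. Conversely, $p^\theta=q$ makes the gradient vanish. This condition is achievable: the softmax map onto the interior of the simplex is surjective, and $q$ lies in the interior, for example at $\ell_i=\log q_i$ up to an additive constant. Moreover, $\mathcal{L}$ as a function of $p$ equals $H(q)+\mathrm{KL}(q\|p^\theta)$, so $p^\theta=q$ is the global minimizer. It is the unique stationary distribution even though $\ell^\theta$ itself is determined only up to a shift.

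\textbf{Main obstacle.} The only delicate point is interpretive: "stationary" must be read with respect to the group logits $\ell^\theta$, not the network parameters $\theta$. I would state this explicitly rather than claim anything about $\theta$-stationarity, since the chain rule can give additional zeros of the $\theta$-gradient through a degenerate Jacobian.
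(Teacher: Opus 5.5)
Your proof is correct, and for the variational claim it takes a different route from the paper. The paper argues that the objective is strictly concave on the simplex, because $-\mathrm{KL}(r\,\|\,p^{\text{old}})$ is strictly concave when $p^{\text{old}}$ has full support. It then adds a Lagrange multiplier for $\sum_i r_i=1$ and solves the first-order condition $u_i-\eta(\log(r_i/p_i^{\text{old}})+1)+\lambda=0$ to obtain $r_i\propto p_i^{\text{old}}\exp(u_i/\eta)$.

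You instead rewrite the objective as a divergence, $\sum_i r_iu_i-\eta\,\mathrm{KL}(r\,\|\,p^{\text{old}})=\eta\log Z-\eta\,\mathrm{KL}(r\,\|\,q)$, and conclude by Gibbs' inequality. Your identity gives existence, uniqueness and the optimal value $\eta\log Z$ at once, and it holds on the closed simplex, including points with some $r_i=0$. The Lagrangian argument enforces only the equality constraint. It leaves implicit that the critical point it finds is interior, and that is what lets concavity turn the first-order condition into global optimality. In exchange, the paper's derivation produces the exponential-tilt form, whereas yours needs $q$ as an input to be verified; that is why the Lagrangian route is the natural way to motivate the target.

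The gradient part is the same softmax cross-entropy computation as in the paper. Your additions are all correct and consistent with the paper:
\begin{itemize}
\item $p^\theta=q$ is attainable, since softmax maps onto the open simplex;
\item it is the global minimizer, because $\mathcal{L}_{\text{TPO}}=H(q)+\mathrm{KL}(q\,\|\,p^\theta)$;
\item stationarity is meant with respect to $\ell^\theta$, not $\theta$.
\end{itemize}
The paper's own argument is exactly that $\nabla_{\ell^\theta}\mathcal{L}_{\text{TPO}}=0$ iff $p^\theta=q$.
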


\begin{proof}
The objective in Eq.~\ref{eq:regularized-improvement} is strictly concave in $r$ because $-\mathrm{KL}(r \,\|\, p^{\text{old}})$ is strictly concave on the simplex when $p^{\text{old}}$ has full support. Introducing a Lagrange multiplier for $\sum_i r_i = 1$ and differentiating gives
\[
u_i - \eta\Bigl(\log \frac{r_i}{p_i^{\text{old}}} + 1\Bigr) + \lambda = 0,
\]
hence $r_i = C\,p_i^{\text{old}}\exp(u_i/\eta)$ for a normalization constant $C$, which yields Eq.~\ref{eq:target}.

Treating $q$ as fixed, differentiating the softmax cross-entropy with respect to the group logits gives $\partial \mathcal{L}/\partial \ell_i^\theta = p_i^\theta - q_i$. Therefore $\nabla_{\ell^\theta}\mathcal{L}_{\text{TPO}} = 0$ iff $p^\theta = q$, which identifies the unique stationary distribution over the sampled candidates.
\end{proof}

\begin{algorithm}[t]
\caption{Target Policy Optimization (TPO)}
\label{alg:tpo}
\renewcommand{\algorithmicrequire}{\textbf{Input:}}
\begin{algorithmic}[1]
\REQUIRE Policy $\pi_\theta$, scorer $S$, candidates per context $K$, temperature $\eta$ (default 1).
\REPEAT
    \STATE Freeze the behavior policy: $\pi_{\text{old}} \leftarrow \pi_\theta$.
    \STATE Sample a batch of contexts $x$ and candidates $\{y_i\}_{i=1}^K \sim \pi_{\text{old}}(\cdot \mid x)$.
    \STATE Compute scores $s_i = S(x, y_i)$ and form $s=(s_1,\dots,s_K)$.
    \STATE Standardize: $u_i = [\operatorname{standardize}(s)]_i$.
    \STATE Compute the target
    \[
    q_i
    =
    \frac{p_i^{\text{old}} \exp(u_i / \eta)}
    {\sum_{j=1}^K p_j^{\text{old}} \exp(u_j / \eta)}.
    \]
    \STATE Take one or more gradient steps on
    \[
    \mathcal{L}_{\text{TPO}}(\theta) = -\sum_{i=1}^K q_i \log p_i^\theta,
    \]
    treating $q$ as fixed.
\UNTIL{converged}
\end{algorithmic}
\end{algorithm}

\begin{figure*}[t]
\begin{minipage}[t]{0.48\textwidth}
\centering{\small\bfseries JAX}\vspace{0.5em}
\begin{lstlisting}[style=code]
def tpo_target(log_scores, u, eta=1.0):
    return jax.nn.softmax(
        jax.nn.log_softmax(log_scores, -1)
        + u / eta, -1)

q = jax.lax.stop_gradient(
    tpo_target(log_scores, u))
log_p = jax.nn.log_softmax(log_scores, -1)
loss = -(q * log_p).sum(-1).mean()
\end{lstlisting}
\end{minipage}%
\hfill
\begin{minipage}[t]{0.48\textwidth}
\centering{\small\bfseries PyTorch}\vspace{0.5em}
\begin{lstlisting}[style=code]
def tpo_target(log_scores, u, eta=1.0):
    return F.softmax(
        F.log_softmax(log_scores, -1)
        + u / eta, -1)

q = tpo_target(
    log_scores, u).detach()
log_p = F.log_softmax(log_scores, -1)
loss = -(q * log_p).sum(-1).mean()
\end{lstlisting}
\end{minipage}
\caption{\textbf{Implementation sketch.}
\texttt{log\_scores} contains the policy log-probabilities of the sampled candidates, renormalized by \texttt{log\_softmax} to form the policy over the group; \texttt{u} contains standardized task scores; \texttt{eta} is an optional temperature with default value 1. The sketch shows the simplest on-policy implementation, where the same \texttt{log\_scores} tensor is used both to form $q$ and to compute \texttt{log\_p}, with $q$ detached from the computation graph before the update.}
\label{fig:code}
\end{figure*}

\section{Experiments}
\label{sec:experiments}

Baselines are PPO~\citep{schulman2017ppo}, GRPO~\citep{shao2024deepseekmath}, and DG~\citep{osband2026dg}. For dense-reward experiments, we compare token-level grouped variants (TPO\textsubscript{token}, GRPO\textsubscript{token}) that sample $K{=}8$ next-token candidates at each prefix state; for terminal reward, we use sequence-level TPO and GRPO with $K{=}8$ full rollouts per prompt; for LLM RLVR, $K{=}16$. PPO, GRPO, and TPO take multiple gradient epochs per rollout batch; DG uses a single epoch, following \citet{osband2026dg}, because it diverges with more (Appendix~\ref{app:dg-multiepoch}). Our GRPO baseline uses the clipped surrogate with $z$-scored advantages and a reverse-KL penalty (Appendix~\ref{app:grpo-baseline}); we refer to it simply as GRPO throughout.

Where grouped methods consume $K\times$ more rollouts than single-sample methods for the same number of prompts, we report two comparisons. \emph{Prompt-matched}: same number of prompts; grouped methods use more total rollouts. \emph{Interaction-matched}: same total rollouts; single-sample methods see more prompts.

Unless stated otherwise, all transformer experiments use Optax's \citep{deepmind2020jax} Muon optimizer~\citep{jordan2024muon} at learning rate $10^{-3}$ and batch size $B{=}100$, with Muon applied to 2D parameter tensors and AdamW to non-2D tensors.

\subsection{Single-context bandit: within-context update quality}
\label{sec:exp-tabular-single}

Following \citet{osband2026dg}, we replace the network with explicit logit tables so the softmax policy and its gradients can be computed exactly. These tabular runs do not use a neural optimizer; we take normalized logit steps of size $\alpha{=}0.1$ directly.

We consider a $K$-armed bandit with one correct action $y^*$ among $K{=}100$ choices. The reward is $R = \mathbf{1}\{A = y^*\}$. At each step, the agent samples $B{=}100$ actions, computes a gradient estimate, and takes a normalized step. We average over 100 seeds.

\begin{figure}[t]
\centering
\includegraphics[width=\textwidth]{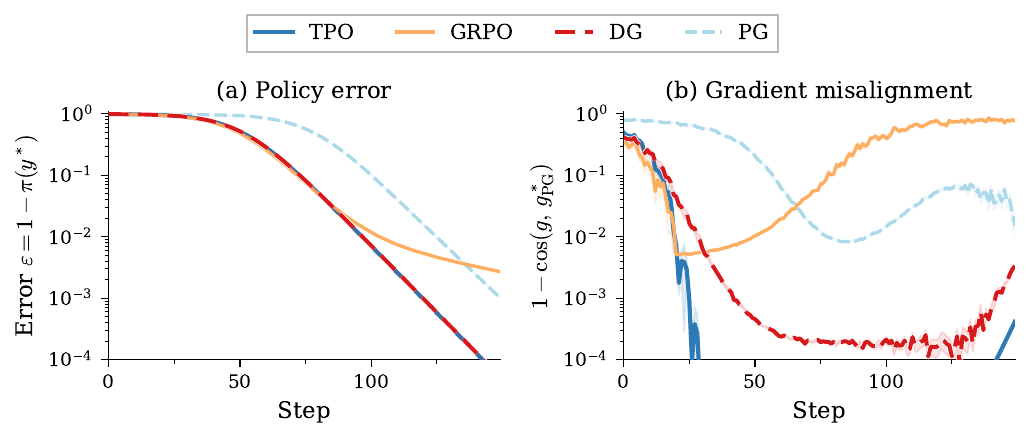}
\caption{\textbf{Single-context symmetric bandit} ($K{=}100$, $B{=}100$, normalized steps). (a)~TPO and DG converge fastest; GRPO and PG plateau at higher error. (b)~TPO maintains the lowest misalignment to the oracle gradient throughout training.}
\label{fig:tabular_single}
\end{figure}

Figure~\ref{fig:tabular_single} shows that TPO and DG converge fastest. Unlike PG and GRPO, they continue improving beyond 1\% error. The misalignment panel shows why: TPO stays closest to the oracle policy-gradient direction as the policy concentrates, while GRPO becomes increasingly misaligned.

\subsection{Multi-context bandit: cross-context allocation}
\label{sec:exp-tabular-multi}

The single-context experiment tests update quality; this one tests how a normalized update allocates a finite step budget \emph{across} contexts. We consider $N{=}100$ independent contexts, each a $K{=}10$ bandit with $\mathcal{N}(0,1)$ logit initialization~\citep{osband2026dg}. Exact population updates remove sampling variance, so any remaining gap reflects how each method distributes the step. We include the cross-entropy (CE) oracle, which is optimal under normalized steps in this setting.

\begin{figure}[t]
\centering
\includegraphics[width=\textwidth]{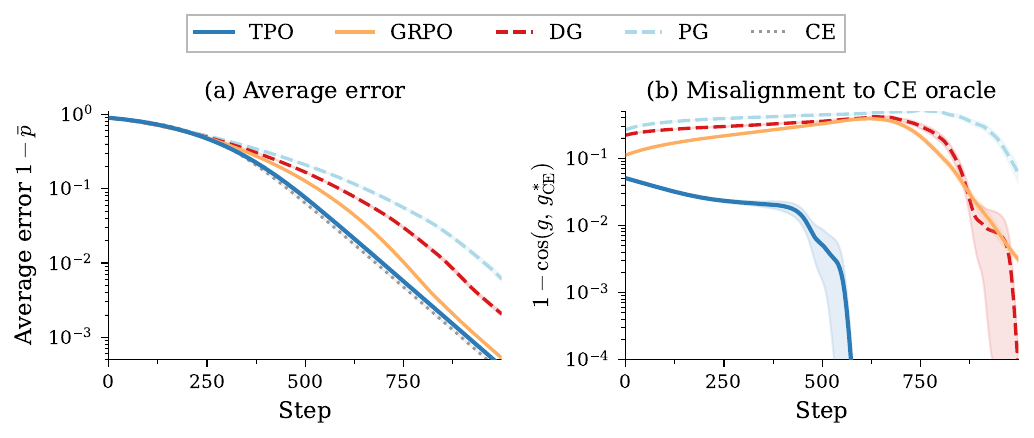}
\caption{\textbf{Multi-context bandit} ($N{=}100$, $K{=}10$, exact gradients). (a)~All methods converge; the CE oracle is fastest. (b)~TPO achieves near-zero misalignment to the CE oracle direction, confirming its update direction targets the optimal allocation.}
\label{fig:tabular_multi}
\end{figure}

Figure~\ref{fig:tabular_multi} shows that all methods eventually converge and that CE is fastest, but among the RL updates TPO is the closest to CE in both error and direction. DG and GRPO improve slightly faster at the start, but TPO overtakes them after the early transient and finishes with the lowest error of the three. The misalignment panel shows the same pattern more clearly: TPO remains much closer to the CE direction throughout training.

This pattern is analytically transparent in the one-hot setting. Let $p_n=\pi_n(y_n)$ be the current probability of the correct action in context $n$. Working in \emph{logit space} with baseline $b{=}0$, every exact update can be written as
\[
g_n = \beta(p_n)\,(e_{y_n}-\pi_n),
\]
so all methods share the same within-context direction $e_{y_n}-\pi_n$ and differ only in the scalar weight $\beta(p_n)$. Because the global step is normalized, $\beta$ controls how much of that step is spent on context~$n$: a method that assigns larger $\beta$ to easy (high-$p_n$) contexts wastes budget where it is least needed.

The coefficients (derived in Appendix~\ref{app:tabular-multi-weight}) are:
\[
\beta_{\mathrm{CE}}(p_n)=1,\qquad
\beta_{\mathrm{DG}}(p_n)=\frac{p_n}{1+p_n},\qquad
\beta_{\mathrm{GRPO}}(p_n)=\sqrt{\frac{p_n}{1-p_n}},\qquad
\beta_{\mathrm{TPO}}(p_n)=\frac{p_n(\lambda-1)}{1-p_n+\lambda p_n},
\]
where $\lambda=\exp(u_{y_n}-u_{a\neq y_n})\approx 28$ for $K{=}10$.

CE treats every context equally ($\beta{=}1$ everywhere). DG and GRPO both \emph{vanish} as $p_n \to 0$: when a context is hard, they barely update it. DG vanishes linearly ($\beta \approx p_n$) and GRPO vanishes as $\sqrt{p_n}$, so both spend most of the normalized step on contexts that are already nearly solved. TPO's coefficient, by contrast, stays large even at small $p_n$: at $p_n{=}0.1$, $\beta_{\mathrm{TPO}}{=}0.73$ versus $0.09$ for DG and $0.33$ for GRPO. TPO therefore allocates more update budget to hard contexts, which is why it tracks the CE oracle more closely and overtakes the scalar-weighted baselines after the initial transient.

\subsection{Neural policy learning: MNIST contextual bandit}
\label{sec:exp-discrete}

Following \citet{osband2026dg}, we cast MNIST classification as a one-step contextual bandit: the agent samples $A \in \{0,\dots,9\}$ and receives $R = \mathbf{1}\{A = Y\}$ without observing the label $Y$. A two-layer ReLU network trains for 10{,}000 steps (20 seeds). Each method samples a single action per context and updates from bandit feedback alone.
We optimize the network with Adam at learning rate $10^{-3}$ and batch size $B{=}100$.

\begin{figure}[t]
\centering
\includegraphics[width=\textwidth]{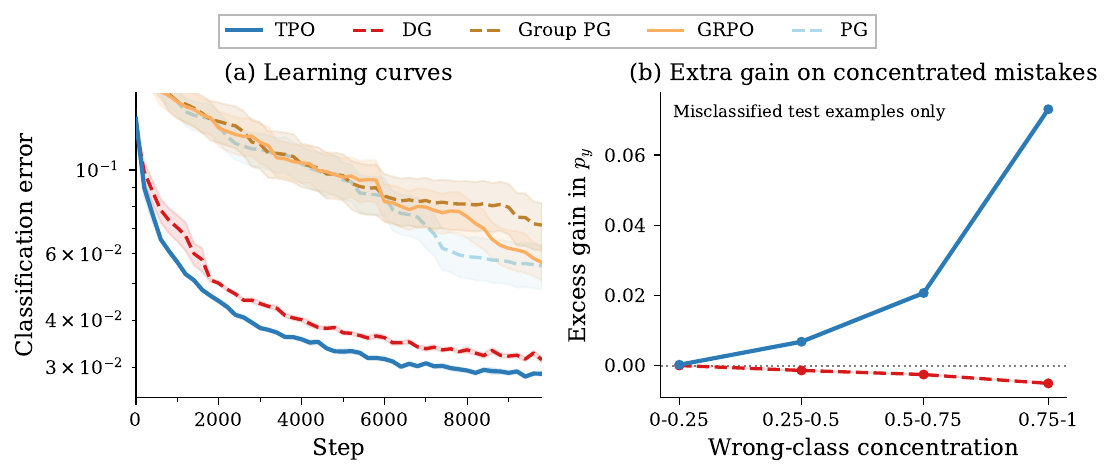}
\caption{\textbf{MNIST contextual bandit: TPO converges fastest and reaches the lowest error.} (a)~Learning curves for all single-sample bandit updates, including the same-signal ablation Group PG. (b)~At step 2{,}000, for each misclassified example we measure how much more each method increases the true-class probability $p_y$ compared to a generic one-vs-rest baseline (Appendix~\ref{app:mnist-logit}), binned by wrong-class concentration $c = \max_{j \ne y}\pi_j/(1-\pi_y)$. TPO's extra gain grows with concentration; DG's does not.}
\label{fig:mnist}
\end{figure}

Figure~\ref{fig:mnist} shows that the tabular pattern survives the transition to a neural policy: TPO converges fastest (5\% error at step 1{,}600 vs.\ 2{,}200 for DG) and reaches the lowest final error (2.9\%). With a single sampled action per context, GRPO reduces to batch-normalized REINFORCE and therefore performs comparably to PG (5.9\% vs.\ 5.3\%).

PG, single-sample GRPO, and Group PG all learn ``increase the true class versus the rest'' without using which wrong class was sampled --- in expectation, they collapse to a rescaled one-vs-rest direction $c(x)(e_y-\pi)$ (Appendix~\ref{app:mnist-logit}). DG and TPO both condition on the sampled action, but only TPO turns a failed sample into a class-specific target update: a correct sample pulls probability toward the label, while an incorrect sample directly suppresses the sampled wrong class. This extra structure should matter most when error mass is concentrated on one or a few confusing alternatives. Removing it confirms this: Group PG keeps the same candidates and standardized scores but replaces target matching with scalar-weighted REINFORCE, raising final error from 2.9\% to 7.2\%.

Figure~\ref{fig:mnist}(b) tests that prediction directly. On each misclassified test example, let $c = \max_{j \ne y}\pi_j/(1-\pi_y)$ denote the fraction of wrong-class mass carried by the most likely wrong label.

We then compare the exact first-order gain in $p_y$ to the scalar one-vs-rest surrogate from Appendix~\ref{app:mnist-logit}. TPO's surplus is near zero when the error mass is diffuse, but rises to $0.073$ in the highest-concentration bin at the step-2{,}000 checkpoint; DG stays slightly negative throughout. TPO's benefit therefore appears exactly where one-vs-rest corrections are too coarse: examples dominated by one confusing wrong label.

\subsection{Dense sequence reward: token-level transformer grouping}
\label{sec:exp-sequence}

Dense per-token rewards let us group at the token level. We use the Token Reversal task of \citet{osband2026dg}: a 2-layer, 4-head causal transformer autoregressively reverses an input sequence of length $H{=}10$ drawn uniformly from a vocabulary of size $V$. The reward is the \emph{bag-of-tokens} fraction of tokens reversed correctly. We sweep $V \in \{2, 4, 8, 16\}$, growing the output space from $2^{10} \approx 10^3$ to $16^{10} \approx 10^{12}$, and report \emph{sequence error} (fraction of tokens incorrect) averaged over 20 seeds.

At each prefix state, we sample $K{=}8$ next-token candidates and form the group over those candidates (TPO\textsubscript{token}, GRPO\textsubscript{token}). For autoregressive models, $\ell_i^\theta$ is the usual sum of per-token log-probabilities. All methods follow one behavior trajectory per prompt, so environment interactions are matched.

\begin{figure}[t]
\centering
\includegraphics[width=\textwidth]{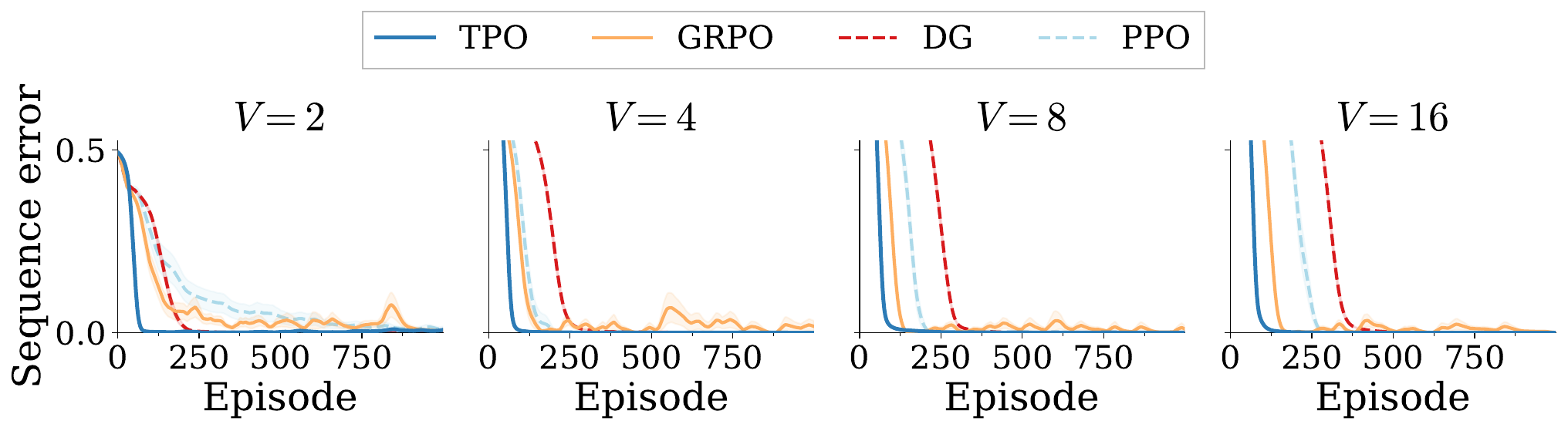}
\caption{\textbf{Token Reversal} (bag-of-tokens reward, $K{=}8$ token candidates). All methods use $B{=}100$ prompts and follow one behavior trajectory each; TPO\textsubscript{token} and GRPO\textsubscript{token} additionally sample $K$ next-token candidates at each prefix state. Columns vary vocabulary size $V \in \{2, 4, 8, 16\}$.}
\label{fig:transformer}
\end{figure}

The gap between methods widens with task difficulty (Table~\ref{tab:transformer}, Figure~\ref{fig:transformer}): at $V{=}16$, TPO\textsubscript{token} reaches 1\% error at step 102, compared to 148 for GRPO\textsubscript{token}, 259 for PPO, and 393 for DG.

\begin{table}[t]
  \caption{\textbf{Steps to 1\% error.} Token Reversal (bag-of-tokens reward, $K{=}8$ token candidates). \textbf{Bold}: fastest method at each $V$. All methods use the same environment interactions per step.}
  \label{tab:transformer}
  \centering
  \small
  \begin{tabular}{lcccc}
    \toprule
    & $V=2$ & $V=4$ & $V=8$ & $V=16$ \\
    \midrule
    TPO\textsubscript{token}  & \textbf{58}  & \textbf{74}  & \textbf{103} & \textbf{102} \\
    GRPO\textsubscript{token} & 904          & 141          & 124          & 148 \\
    DG                        & 199          & 273          & 314          & 393 \\
    PPO                       & 872          & 181          & 191          & 259 \\
    \bottomrule
  \end{tabular}
\end{table}

Because all methods follow a single behavior trajectory per prompt, there is no prompt-matched vs.\ interaction-matched distinction, rollout budgets are identical. GRPO\textsubscript{token} improves with larger $V$ (where more token candidates provide a richer signal) but lags behind TPO\textsubscript{token} throughout. DG and PPO, which lack within-group structure, scale less favorably.

\subsection{Generalization across task and reward variants}
\label{sec:exp-variations}

Does the pattern hold beyond token reversal? Following \citet{osband2026dg}, we evaluate four target logics (copy, flip, reverse copy, reverse flip) under two reward structures (bag-of-tokens and sequential), yielding eight variants. Sequential reward gives credit only up to the first incorrect token, sparser than bag-of-tokens but denser than terminal. Hyperparameters match Section~\ref{sec:exp-sequence} ($H{=}10$, $V{=}2$, $K{=}8$ token candidates); 10 seeds, 1{,}000 episodes.

\begin{figure}[t]
\centering
\includegraphics[width=\textwidth]{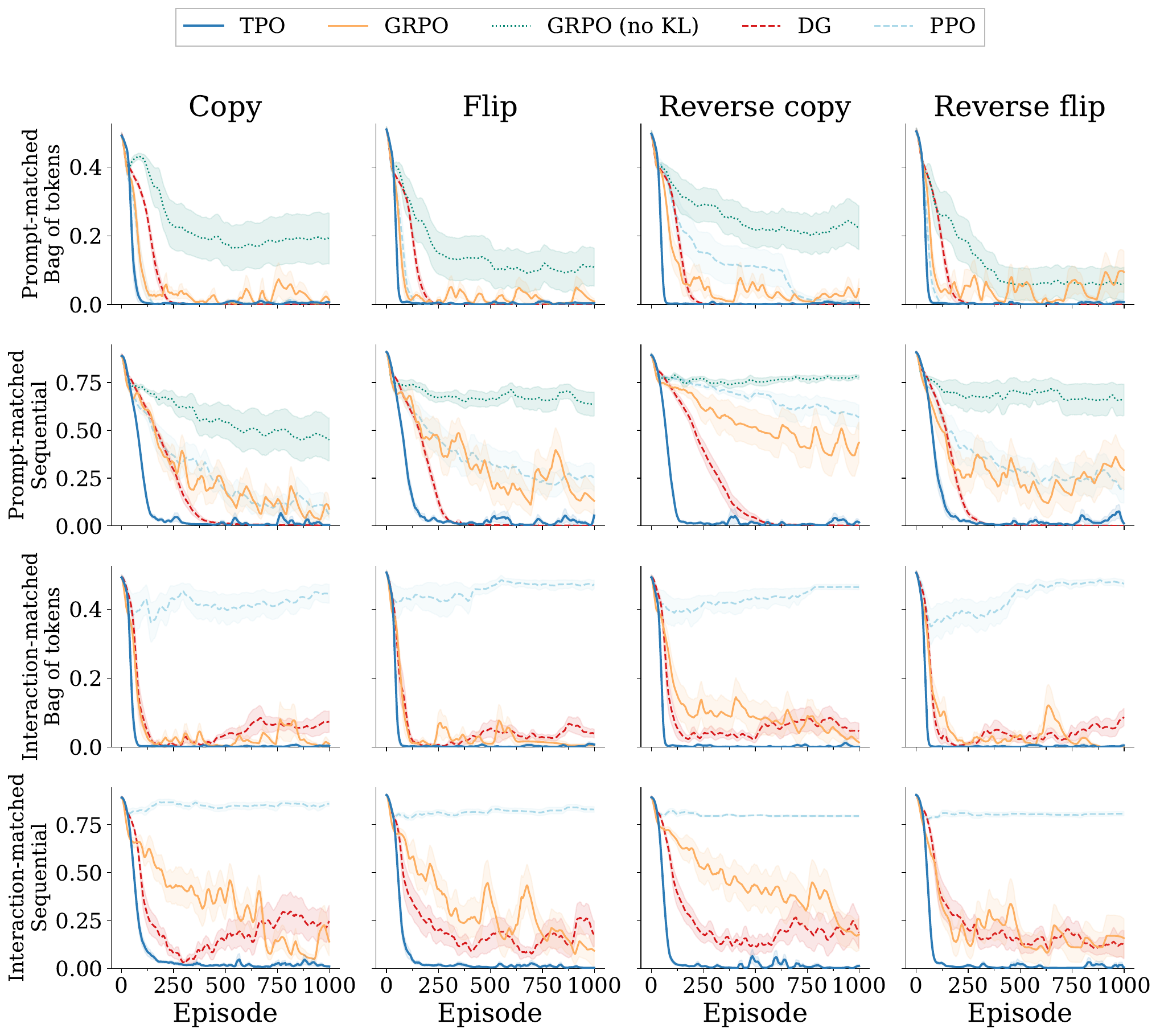}
\caption{\textbf{Task variations, prompt- and interaction-matched.} \emph{Top two rows}: prompt-matched. \emph{Bottom two rows}: interaction-matched. Within each pair, the first row is bag-of-tokens reward and the second is sequential reward. Columns vary target logic.}
\label{fig:variations}
\end{figure}

Under bag-of-tokens reward (top row of Figure~\ref{fig:variations}), TPO\textsubscript{token} reaches 1\% error first on all eight variants (Table~\ref{tab:variations}), 2--6$\times$ faster than the runner-up. All methods except PPO eventually reach 1\% on bag-of-tokens tasks. Under sequential reward, TPO\textsubscript{token}'s advantage widens: it reaches 1\% error on all four tasks within our budget; DG converges on all four but more slowly; GRPO\textsubscript{token} and PPO fail to converge on any.

\begin{table}[t]
  \caption{\textbf{Steps to 1\% error, task variations} ($K{=}8$ token candidates). \textbf{Bold}: fastest per row. ``$-$'': never reached within budget.}
  \label{tab:variations}
  \centering
  \small
  \begin{tabular}{llcccc}
    \toprule
    Reward & Target & TPO\textsubscript{token} & GRPO\textsubscript{token} & DG & PPO \\
    \midrule
    \multirow{4}{*}{Bag of tokens}
    & Copy         & \textbf{81}  & 338  & 219  & 170 \\
    & Flip         & \textbf{56}  & 104  & 201  & 146 \\
    & Rev.\ copy   & \textbf{55}  & 352  & 202  & $-$ \\
    & Rev.\ flip   & \textbf{59}  & 209  & 200  & 143 \\
    \midrule
    \multirow{4}{*}{Sequential}
    & Copy         & \textbf{295} & $-$  & 439  & $-$ \\
    & Flip         & \textbf{321} & $-$  & 349  & $-$ \\
    & Rev.\ copy   & \textbf{159} & $-$  & 515  & $-$ \\
    & Rev.\ flip   & \textbf{276} & $-$  & 309  & $-$ \\
    \bottomrule
  \end{tabular}
\end{table}

Under sequential reward, only TPO\textsubscript{token} and DG converge. The key is per-state targeting: under sequential reward, prefixes after the first mistake see zero reward for every candidate, so the target there matches the old policy and introduces no spurious signal. TPO\textsubscript{token} therefore concentrates its update on informative prefixes where at least one candidate continues correctly. DG's sigmoid gating also helps but is slower; GRPO\textsubscript{token} and PPO lack an equally explicit local target.

\subsection{Sparse credit assignment: terminal reward}
\label{sec:exp-rlvr}

The hardest credit-assignment test removes intermediate feedback entirely: the model receives an exact-match reward only after the full sequence. Without per-token rewards, we revert to sequence-level TPO and GRPO, each sampling $K{=}8$ complete rollouts per prompt. Prompt-matched runs use $B{=}100$; interaction-matched runs scale single-sample batch size and learning rate by $K$ and $\sqrt{K}$ respectively. Other hyperparameters match Section~\ref{sec:exp-sequence} ($V{=}2$); we sweep $H \in \{7, 8, 9, 10\}$ over 2{,}000 episodes. We report exact-match error (fraction of sequences with any mistake), not token-level error.

\begin{figure}[t]
\centering
\includegraphics[width=\textwidth]{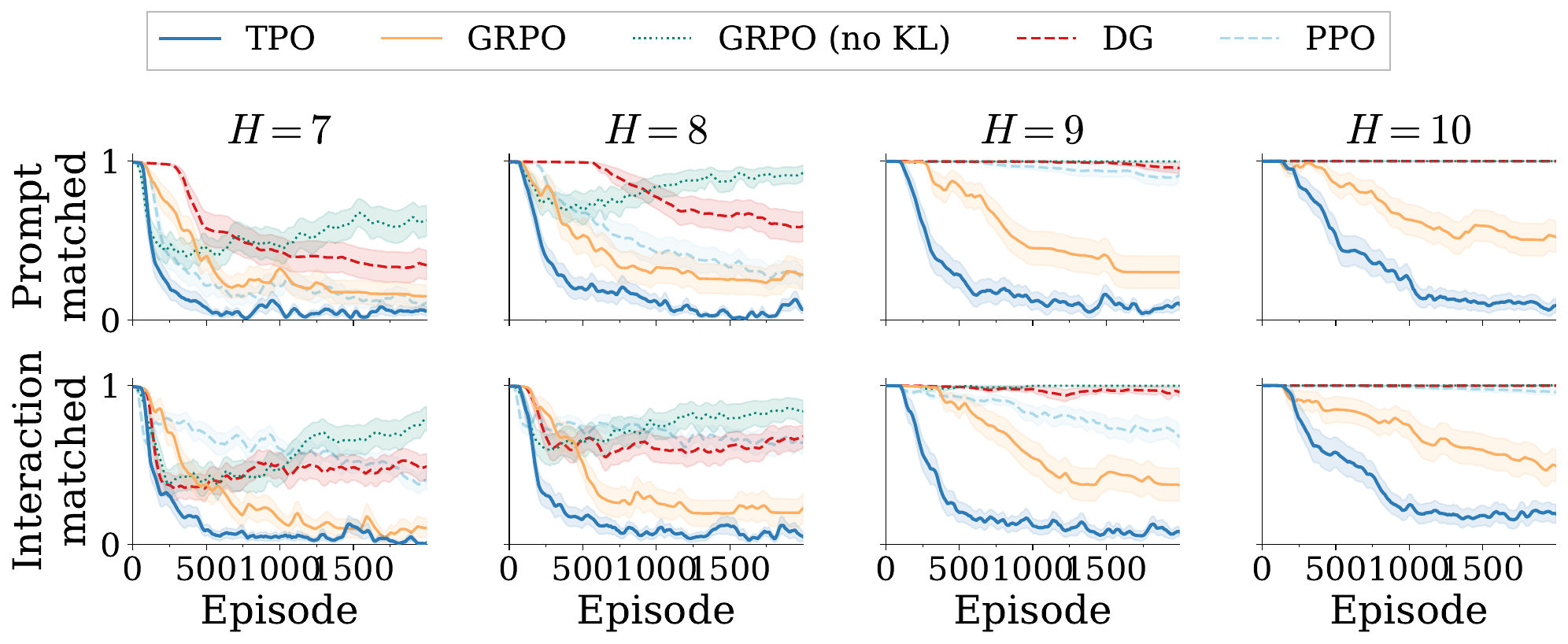}
\caption{\textbf{Terminal reward, prompt- and interaction-matched.} \emph{Top row}: prompt-matched ($B{=}100$ for all methods). \emph{Bottom row}: interaction-matched ($B{\cdot}K{=}800$ rollouts per step, with single-sample batch size and learning rate scaled by $K$ and $\sqrt{K}$ respectively). Here grouped methods use $K{=}8$ candidates per prompt. Y-axis: exact-match error. TPO has the lowest error at each $H$ under both matching conditions.}
\label{fig:rlvr}
\end{figure}

Under prompt matching, the methods diverge most (Table~\ref{tab:rlvr}, top row of Figure~\ref{fig:rlvr}): TPO attains the lowest error at each tested $H$. GRPO and PPO make progress at shorter lengths but degrade steeply; DG fails earlier still. Removing GRPO's KL penalty ($\beta{=}0$) makes it substantially worse (66.6\% at $H{=}7$ and no meaningful learning beyond $H{=}8$), showing that the KL term is GRPO's primary stabilizer under sparse reward.

\begin{table}[t]
  \caption{\textbf{Exact-match error (\%), terminal reward.} \textbf{Bold}: best method. ``$-$'': $>$95\% (no meaningful learning). Left: prompt-matched. Right: interaction-matched. TPO attains the lowest error at each tested $H$.}
  \label{tab:rlvr}
  \centering
  \small
  \begin{tabular}{lcccc|cccc}
    \toprule
    & \multicolumn{4}{c|}{Prompt-matched} & \multicolumn{4}{c}{Interaction-matched} \\
    & $H{=}7$ & $H{=}8$ & $H{=}9$ & $H{=}10$ & $H{=}7$ & $H{=}8$ & $H{=}9$ & $H{=}10$ \\
    \midrule
    TPO  & \textbf{6.9}  & \textbf{8.6}  & \textbf{6.1}  & \textbf{7.4}  & \textbf{1.8}  & \textbf{2.8}  & \textbf{5.3}  & \textbf{19.0} \\
    GRPO & 14.5           & 27.6           & 30.0           & 50.4           & 9.6            & 23.2           & 36.2           & 48.7 \\
    GRPO (no KL) & 66.6   & 92.5           & $-$            & $-$            & 78.1           & 83.8           & $-$            & $-$ \\
    PPO  & 12.0           & 26.3           & 90.6           & $-$            & 38.6           & 62.1           & 66.2           & $-$ \\
    DG   & 33.8           & 58.8           & $-$            & $-$            & 47.7           & 69.4           & $-$            & $-$ \\
    \bottomrule
  \end{tabular}
\end{table}

Under interaction matching (bottom row of Figure~\ref{fig:rlvr}, right half of Table~\ref{tab:rlvr}), TPO remains ahead at each $H$. The gap is wider here than in the bag-of-tokens experiments, where interaction matching narrowed it substantially. With terminal reward, the bottleneck is not gradient variance but extracting useful signal from sparse outcomes, the regime where target matching matters most.

\subsection{Anchor and target-matching ablations}
\label{sec:exp-anchor-group-pg}

To isolate ingredients of TPO's grouped update, we compare TPO against several prompt-matched variants on the same terminal-reward benchmark ($H \in \{7,8,10\}$, $V{=}2$, $K{=}8$, $B{=}100$, 20 seeds). All methods use the same grouped full-sequence rollouts. ``TPO-no-anchor'' removes the $p^{\text{old}}$ anchor ($q_i \propto \exp(u_i)$). ``Group PG'' keeps the same candidates and standardized scores but replaces target matching with scalar-weighted policy gradient. ``GRPO (no KL)'' removes the reverse-KL penalty ($\beta{=}0$).

\begin{figure}[t]
\centering
\includegraphics[width=\textwidth]{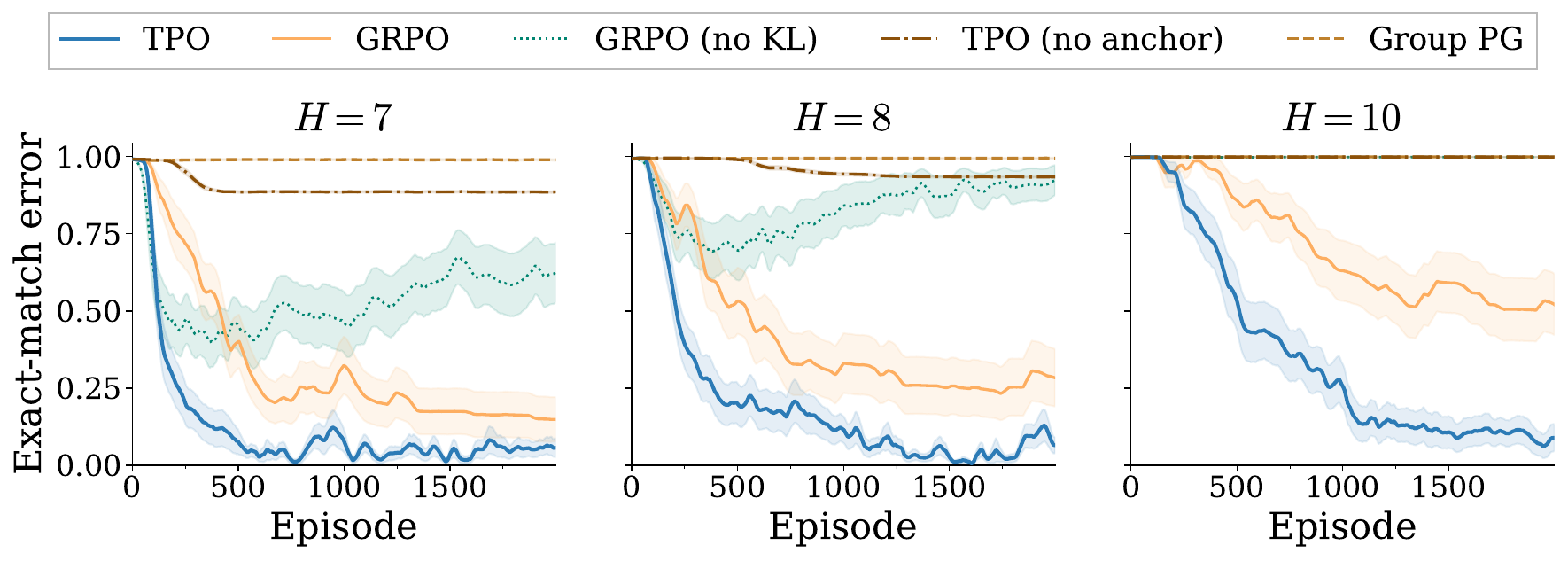}
\caption{\textbf{Removing the anchor, KL penalty, or target matching each degrades learning.} Terminal reward, reverse-copy targets, $V{=}2$, $K{=}8$, $B{=}100$, 20 seeds. Shading shows $\pm 1$ s.e.}
\label{fig:anchor-group-pg}
\end{figure}

Full TPO outperforms every ablation at each sequence length (Figure~\ref{fig:anchor-group-pg}), and the gaps widen with $H$: at $H{=}10$, TPO reaches 7.4\% while every ablation exceeds 99\%. The old-policy anchor is doing real work: removing it is consistently harmful. Target matching itself also matters: keeping the same candidates and standardized scores but reverting to scalar weighting (Group PG) performs worst. Removing GRPO's KL penalty makes it substantially worse, consistent with Section~\ref{sec:exp-rlvr}.

\subsection{LLM RLVR: transfer to billion-parameter models}
\label{sec:exp-llm-rlvr}

GRPO is the de facto standard for billion-parameter LLM RLVR~\citep{lambert2025tulu3pushingfrontiers,guo2025deepseekr1}. Does TPO's advantage transfer to this setting?

We compare TPO and GRPO using the \texttt{verl} stack~\citep{sheng2024verl} on two models (Qwen3-1.7B~\citep{yang2025qwen3} and DeepSeek-R1-Distill-Qwen-1.5B~\citep{guo2025deepseekr1}) and three tasks: GSM8K~\citep{cobbe2021gsm8k}, graph coloring, and Knights \& Knaves (both from Reasoning Gym~\citep{stojanovski2025reasoninggym}). All runs use $K{=}16$ rollouts per prompt; the paired runs differ only in the policy loss (TPO vs.\ clipped surrogate with $z$-scored advantages). Implementation details (optimizer, LoRA, hardware) are in Appendix~\ref{app:llm-rlvr-details}.

\begin{figure}[t]
\centering
\includegraphics[width=\textwidth]{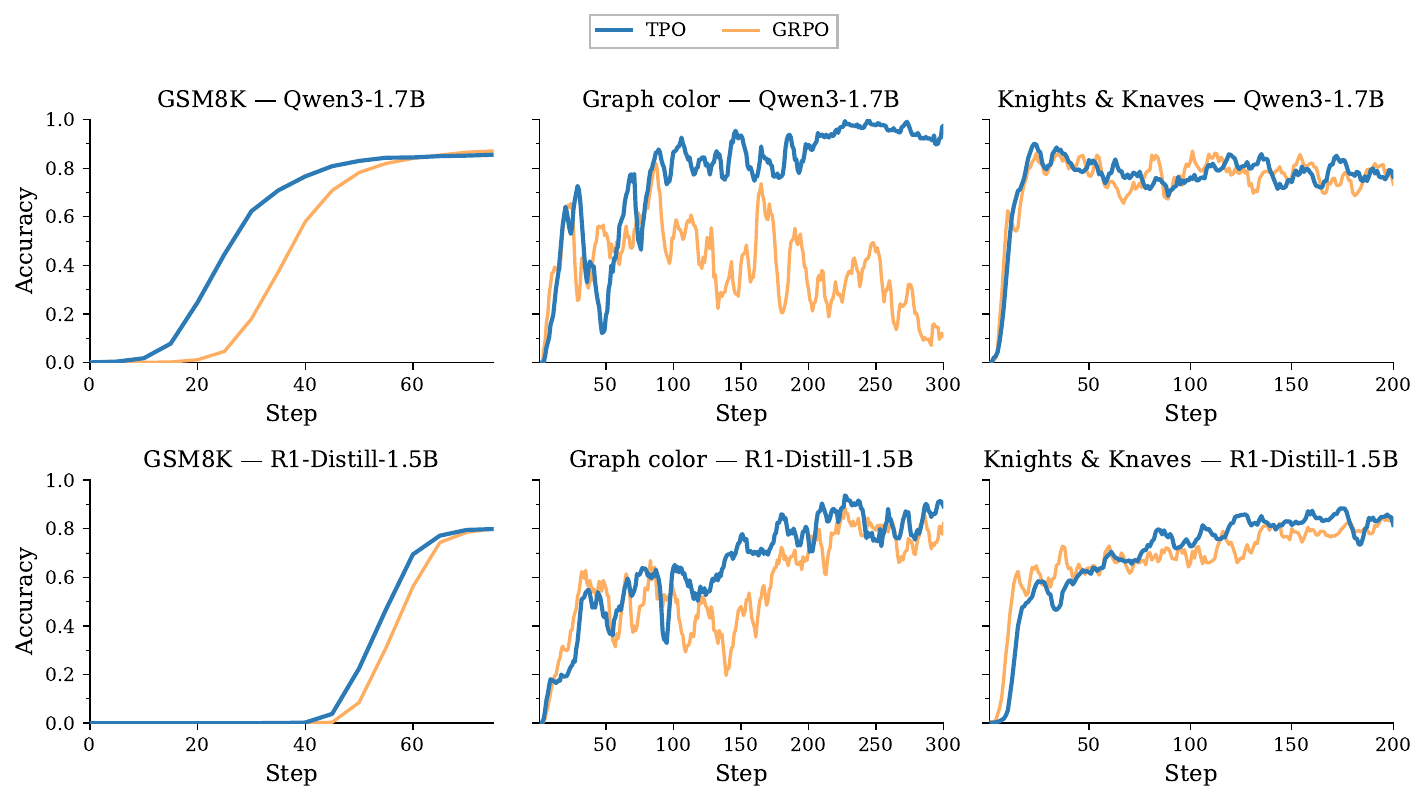}
\caption{\textbf{LLM RLVR.} Top row: Qwen3-1.7B. Bottom row: DeepSeek-R1-Distill-Qwen-1.5B. All runs use $K{=}16$ rollouts per prompt. Columns: GSM8K (held-out test accuracy, evaluated every 5 steps), Reasoning Gym graph coloring (train mean score), Reasoning Gym Knights \& Knaves (train mean score).}
\label{fig:llm_rlvr}
\end{figure}

On GSM8K (Figure~\ref{fig:llm_rlvr}, left column), TPO learns faster early (reaching 50\% accuracy ${\sim}$10 steps before GRPO on Qwen3-1.7B) but both converge to comparable final accuracy (${\sim}$85--87\%), consistent with TPO's advantage being largest during the learning phase.

On the Reasoning Gym tasks (middle and right columns), we plot train mean score. The gap is starker here: on graph coloring, GRPO fails entirely on Qwen3-1.7B (near-zero score for 300 steps) while TPO reaches ${\sim}$0.96. On R1-Distill-1.5B, both learn but TPO converges higher (${\sim}$0.96 vs.\ ${\sim}$0.81). Knights \& Knaves shows the same pattern. These harder tasks expose TPO's advantage more clearly than GSM8K, where both methods eventually saturate.

\section{What explains TPO's gains under sparse reward?}
\label{sec:analysis}

We identify several reinforcing properties: the gradient self-extinguishes once the policy matches the target, signal concentrates on the few informative groups rather than all-fail batches, and the fixed target supports stable multi-epoch reuse. We examine these in a representative sparse-reward regime ($H{=}8$, $V{=}2$, $K{=}32$, $B{=}256$, 2{,}000 episodes). We compute per-step diagnostics from the original 10-seed runs; the $K$-sweep, epoch sweep, and masking ablations use 30 seeds.

\subsection{Does TPO's gradient vanish in practice while GRPO's persists?}
\label{sec:analysis-gradient}

Because TPO's gradient vanishes at $p^\theta = q$ (Proposition~\ref{prop:tpo-characterization}), the update should decay as the policy converges. Policy-gradient methods lack this fixed point.

\begin{figure}[t]
\centering
\includegraphics[width=\textwidth]{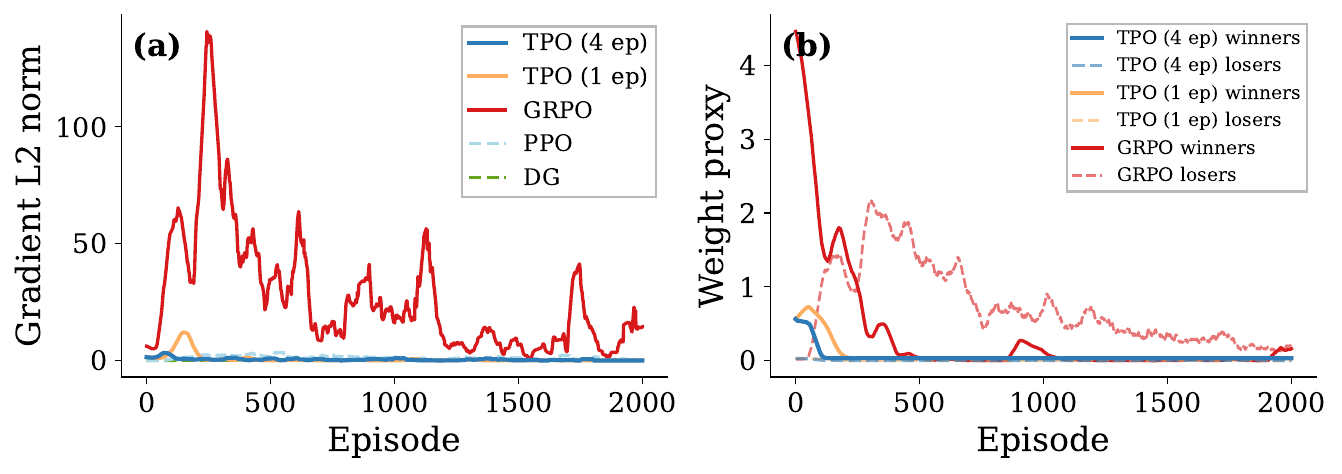}
\caption{\textbf{TPO's gradient self-extinguishes; GRPO's does not} ($H{=}8$, $V{=}2$, $K{=}32$). (a)~Gradient L2 norms over training. (b)~Per-candidate weight proxy on successful (solid) vs.\ failed (dashed) candidates: mean target mass $q_i$ for TPO, mean $|A_i|$ for GRPO.}
\label{fig:ablation-gradient}
\end{figure}

Figure~\ref{fig:ablation-gradient}(a) tracks the L2 norm of the first-epoch gradient over training. TPO's gradient spikes during the learning phase then decays to near zero once the policy converges (${\sim}$episode~300). GRPO maintains persistent gradient norms throughout training, even after its error curve plateaus at 12.7\%. GRPO's policy keeps moving even after its error plateaus, rather than settling near a fixed point.

Figure~\ref{fig:ablation-gradient}(b) shows a per-candidate weight proxy on successful candidates (solid) versus failed ones (dashed). Because the proxy differs between methods (target mass $q_i$ for TPO, advantage magnitude $|A_i|$ for GRPO), the panel is an allocation diagnostic, not a gradient decomposition. TPO rapidly removes weight from failed candidates, whereas GRPO continues assigning nonzero advantage magnitude to failures even late in training. The stronger fixed-point claim comes from panel~(a): TPO's gradient norm collapses near zero, while GRPO's does not.

\subsection{How does TPO allocate signal when informative groups are rare?}
\label{sec:analysis-zerovar}

With $K{=}32$ candidates and a per-sequence success rate of $(1/V)^H \approx 0.4\%$ at initialization, most groups contain no successful completion.

\begin{figure}[t]
\centering
\includegraphics[width=\textwidth]{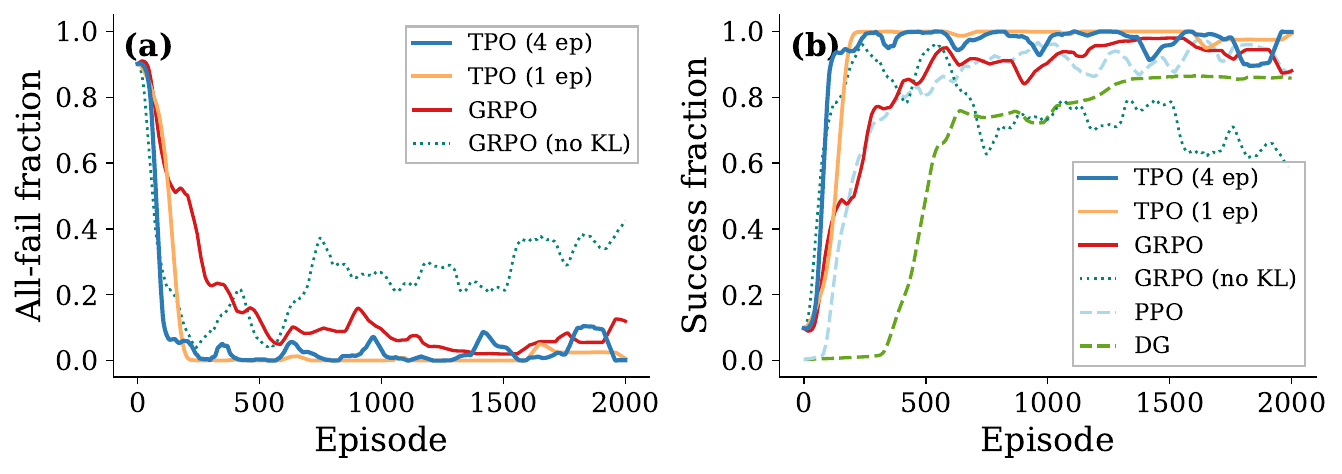}
\caption{\textbf{Most groups carry no signal early in training; TPO eliminates them fastest} ($H{=}8$, $V{=}2$, $K{=}32$). (a)~Fraction of groups where all $K$ candidates fail. (b)~Fraction of prompts with at least one successful candidate.}
\label{fig:ablation-zerovar}
\end{figure}

Figure~\ref{fig:ablation-zerovar}(a) shows that roughly 90\% of groups are all-fail at the start of training. For TPO, these groups are neutral on the rollout snapshot: zero score variance implies standardized scores $u=0$ (Appendix~\ref{app:whitening}), so $q = p^{\text{old}}$ and the grouped-loss contribution on the first epoch is exactly zero. Early in training, TPO therefore spends its first-epoch grouped update on the relatively small fraction of groups that actually distinguish better from worse candidates, namely those containing at least one success. (We show all-fail groups rather than total zero-variance groups because late in training zero variance can also arise from all-success groups, which are not the sparse-reward failure mode of interest.)

In the shared-parameter multi-epoch setting, that neutrality need not persist forever. Once informative groups have moved the policy away from the rollout snapshot, revisiting the same all-fail group yields an anchor term back toward $p^{\text{old}}$. This later-epoch pullback can arise for both TPO and our snapshot-KL GRPO baseline, so zero-variance groups are not permanently ignored. The key property is narrower: on the rollout snapshot, when informative groups are scarce, TPO concentrates its grouped signal on the groups that contain actual ranking information.

As training progresses and the policy improves, the all-fail fraction drops: more groups contain at least one successful candidate (Figure~\ref{fig:ablation-zerovar}(b)). This means a larger fraction of each batch contributes nontrivial target structure. TPO drives the all-fail fraction to near zero quickly, whereas GRPO leaves a larger residual and GRPO (no~KL) remains substantially worse.

\paragraph{Group-size ablation.}
Varying $K \in \{4,8,16,32,64\}$ changes two things at once (Figure~\ref{fig:ablation-k-sweep}). Larger groups are more likely to contain at least one successful completion, and with binary rewards the same within-group $z$-scoring also makes the grouped update much sharper once a success appears.

We therefore interpret this figure as a joint sensitivity sweep over candidate coverage and grouped-signal sharpness. Across 30 seeds, TPO improves from 8.9\% error at $K{=}4$ to 5.2\% at $K{=}8$, 5.1\% at $K{=}16$, 2.6\% at $K{=}32$, and 0.36\% at $K{=}64$. GRPO is weaker and less monotonic: 19.4\%, 19.8\%, 9.2\%, 4.4\%, and 5.6\% across the same sweep. Under this combined change, TPO behaves more smoothly across $K$.

\begin{figure}[t]
\centering
\includegraphics[width=\textwidth]{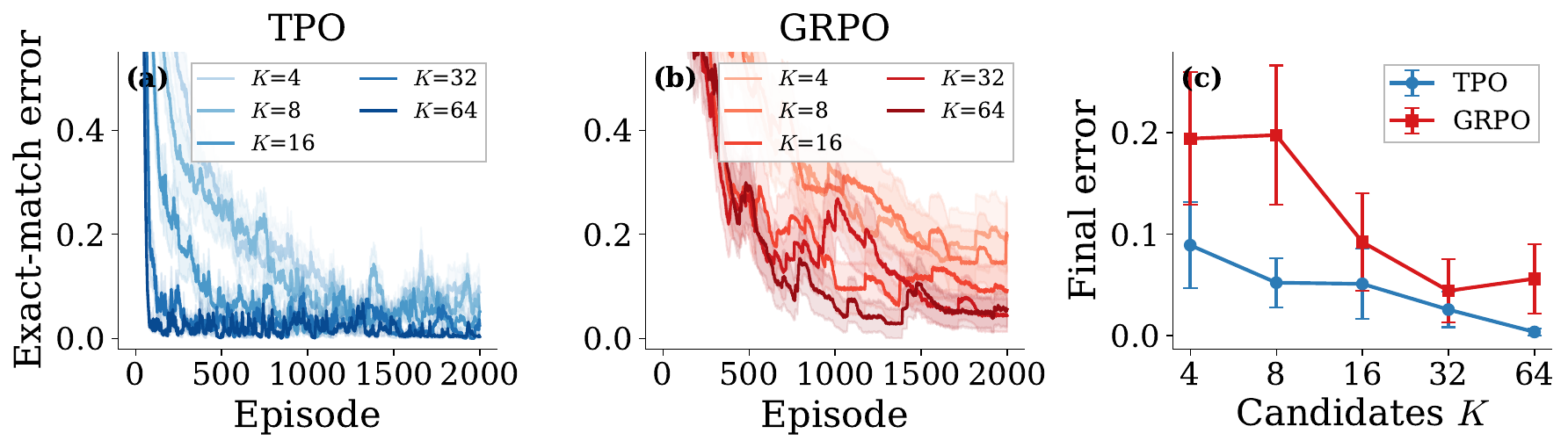}
\caption{\textbf{Group-size sensitivity sweep} ($H{=}8$, $V{=}2$, epochs${=}4$). (a)~TPO learning curves: steady improvement as $K$ grows, with the strongest performance at $K{=}64$. (b)~GRPO learning curves: larger groups help, but performance remains less stable and less monotonic. (c)~Final error vs.\ $K$: TPO improves from 8.9\% at $K{=}4$ to 0.36\% at $K{=}64$; GRPO improves from 19.4\% at $K{=}4$ to 4.4\% at $K{=}32$ and then worsens slightly at $K{=}64$ (5.6\%). 30 seeds, shading/bars $\pm$1 s.e.}
\label{fig:ablation-k-sweep}
\end{figure}

\paragraph{Zero-variance masking.}
If zero-variance groups were simply dead weight, an obvious intervention would be to mask them explicitly. We test ``GRPO (masked),'' which zeros the loss for any group where all $K$ candidates receive the same reward (Figure~\ref{fig:ablation-grpo-fix}).

In the 30-seed aggregate, masking is strongly harmful: final error rises from 6.3\% to 29.7\%. This suggests that these groups are not just junk to delete. In the multi-epoch setting, once informative groups have moved the shared policy, revisiting zero-variance groups can provide a useful anchor back toward the rollout snapshot. Removing them therefore makes GRPO markedly worse, and still does not approach TPO, which reaches 0.05\% in the same setting.

\begin{figure}[t]
\centering
\includegraphics[width=\textwidth]{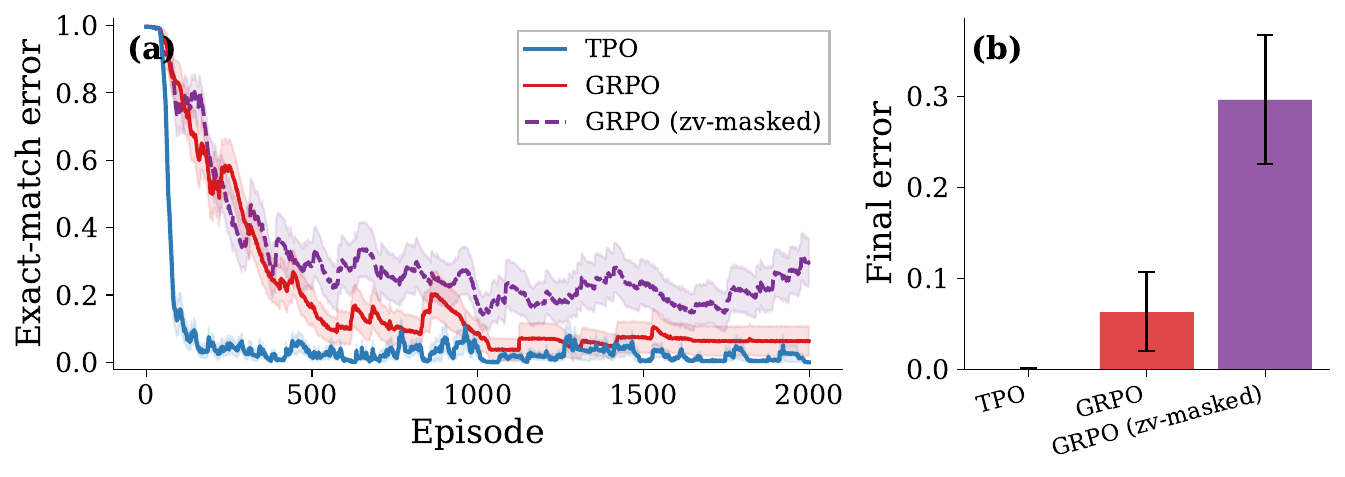}
\caption{\textbf{Zero-variance masking} ($H{=}8$, $V{=}2$, $K{=}32$, epochs${=}4$). (a)~Learning curves: GRPO~(zv-masked) is substantially worse than both GRPO and TPO. (b)~Final error: masking increases GRPO from 6.3\% to 29.7\%, while TPO reaches 0.05\% without any masking. 30 seeds, shading/bars $\pm$1 s.e.}
\label{fig:ablation-grpo-fix}
\end{figure}

\subsection{Does TPO extract more from rare informative batches across epochs?}
\label{sec:analysis-multiepoch}

TPO's fixed target $q$ provides a stable attractor across gradient epochs: the same batch can be reused safely without the trust-region issues that cause DG to diverge with multiple epochs (Appendix~\ref{app:dg-multiepoch}). Under terminal reward, where informative batches are rare, extracting maximum learning from each one is critical.

\begin{figure}[t]
\centering
\includegraphics[width=\textwidth]{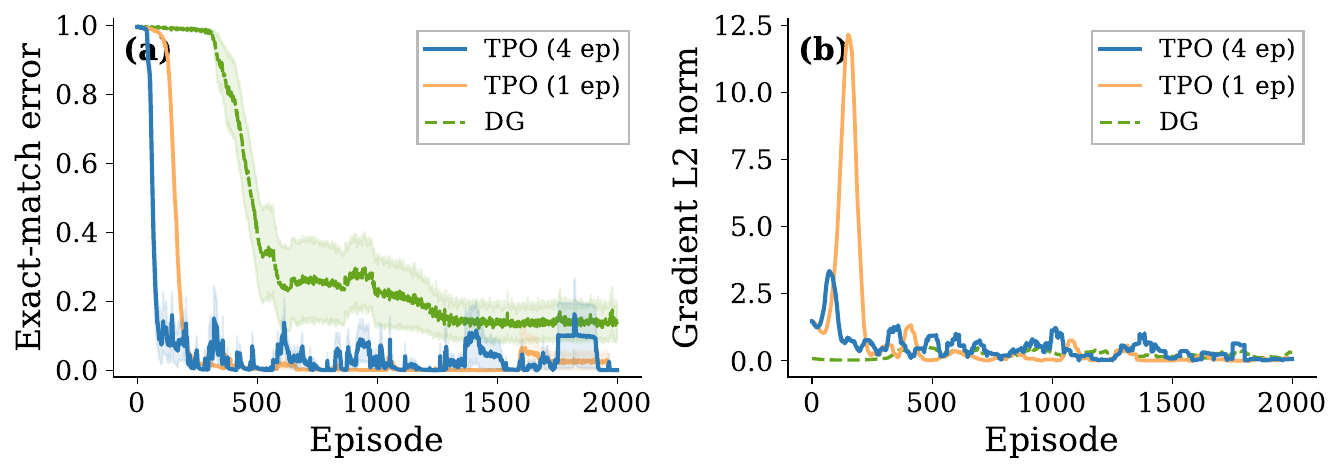}
\caption{\textbf{Multi-epoch extraction} ($H{=}8$, $V{=}2$, $K{=}32$). (a)~Error curves: TPO with 4 gradient epochs reaches 0.2\% error at episode~400 while TPO with 1~epoch is at 1.1\%, roughly $5\times$ faster. Both eventually converge to $<$0.1\%. DG, limited to a single epoch, plateaus at 14\%. (b)~Gradient norms: TPO (4~ep) gradient decays fastest; TPO (1~ep) shows a delayed spike and slower decay; DG's gradient stays low but persistent.}
\label{fig:ablation-multiepoch}
\end{figure}

Figure~\ref{fig:ablation-multiepoch} compares TPO with 4 gradient epochs versus 1. At episode~400, TPO (4~epochs) has reached 0.2\% error while TPO (1~epoch) is at 1.1\%, roughly $5\times$ faster early convergence. Both eventually reach $<$0.1\%, confirming that multi-epoch extraction primarily accelerates learning rather than enabling it. DG, limited to a single epoch, plateaus at 14\%.

\paragraph{Epoch-count ablation.}
We sweep $\{1,2,4,8,16\}$ gradient epochs for both TPO and GRPO (Figure~\ref{fig:ablation-epoch-sweep}). TPO remains stable across the entire range: final error stays below 2.3\% everywhere and is already near zero at 1 and 4~epochs (0.02\% and 0.05\%). GRPO remains strongly non-monotonic: 1~epoch reaches 4.3\%, 2~epochs degrades to 37.6\%, 4~epochs improves to 6.3\%, 8~epochs reaches 3.3\%, and 16~epochs reaches 1.1\%. GRPO can reach low error at the right epoch count, but is much more sensitive to this choice.

\begin{figure}[t]
\centering
\includegraphics[width=\textwidth]{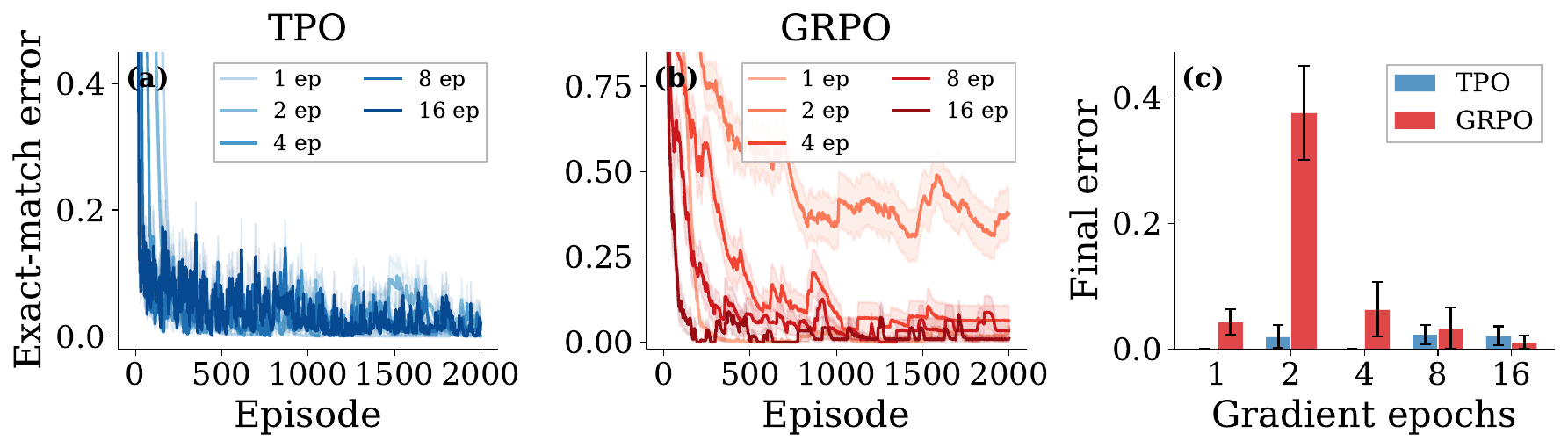}
\caption{\textbf{Epoch-count ablation} ($H{=}8$, $V{=}2$, $K{=}32$). (a)~TPO learning curves across epoch counts: all converge smoothly and remain low-error throughout. (b)~GRPO learning curves: 2~epochs is the worst setting, while 8 and 16~epochs recover strongly. (c)~Final error comparison: TPO stays below 2.3\% everywhere; GRPO is strongly non-monotonic (37.6\% at 2~epochs, 1.1\% at 16~epochs). 30 seeds, shading $\pm$1 s.e.}
\label{fig:ablation-epoch-sweep}
\end{figure}

\medskip\noindent
No single property explains TPO's sparse-reward advantage. The gradient norm collapses as the policy approaches its target, performance degrades smoothly rather than abruptly when $K$ or epoch count varies, and multi-epoch reuse works without careful tuning. These properties reinforce each other and are absent from the baselines.

\section{Related work}
\label{sec:related}

\textbf{Target-matching and mirror-descent methods.}
TPO's target (Eq.~\ref{eq:target}) is the closed-form solution to
$\arg\min_q \mathrm{KL}(q \,\|\, p^{\text{old}}) - \tfrac{1}{\eta}\,\mathbb{E}_q[u]$
restricted to $K$ candidates.
The closest relatives are REPS~\citep{peters2010reps}, MPO~\citep{abdolmaleki2018mpo}, and V-MPO~\citep{song2020vmpo}, which use the same exponential-tilting step but require a critic or value estimate to supply the improvement signal.
AWR~\citep{peng2019awr} also uses KL-regularized improvement weights but treats each sample's $\exp(A/\beta)$ as a fixed scalar on its log-likelihood, so its gradient does not self-extinguish at the target.
TPO's distinguishing property is that the finite scored candidate set provides the target in closed form without a critic or inner optimization loop, and its gradient $p^\theta{-}q$ vanishes once the target is matched.
MDPO~\citep{tomar2020mdpo} gives a mirror-descent perspective on the same family.
More generally, TPO can be read as a KL-regularized policy-improvement operator on the sampled candidate simplex rather than the full action space~\citep{kakade2001natural,geist2019regularized}.

\textbf{Group-based policy-gradient methods.}
RLOO~\citep{ahmadian2024rloo} and GRPO~\citep{shao2024deepseekmath} also score multiple candidates per context but convert them into per-sample scalar weights inside a policy-gradient objective.
TPO instead builds a target distribution on the candidate simplex and fits the policy to it.
Recent GRPO variants address specific failure modes while remaining scalar-weighted PG methods: Dr.~GRPO~\citep{liu2025drgrpo} removes a difficulty bias from within-group $\sigma$-normalization~\citep{murphy2025reinforcementlearningoverview}; DAPO~\citep{yu2025dapo} uses asymmetric clipping to prevent entropy collapse; GSPO~\citep{zhang2025gspo} fixes a per-token importance-ratio mismatch when rewards are trajectory-level.
TPO replaces the scalar-weighted update with a single target distribution over the group, avoiding importance ratios and clipping entirely. Because it still standardizes within-group scores, however, low-variance difficulty-bias effects can remain in principle, as discussed in Section~\ref{sec:discussion}.

\textbf{Single-sample policy-gradient methods.}
REINFORCE~\citep{williams1992simple}, TRPO~\citep{schulman2015trpo}, PPO~\citep{schulman2017ppo},
REINFORCE++~\citep{hu2025reinforceplusplus}, and ReMax~\citep{li2024remax} all assign scalar advantage weights to sampled actions.
ReMax removes the value model and uses a greedy-decode baseline for variance reduction, yielding large memory and speed gains over PPO, but the gradient remains the standard score-function estimator.
DG~\citep{osband2026dg} corrects gradient misallocation \emph{across contexts} via sigmoid gating;
TPO addresses misallocation \emph{within} a context's candidate set.
The two are complementary and can be composed.

\textbf{Regression- and preference-based methods.}
REBEL~\citep{gao2024rebel} reduces RL to iterative least-squares regression on reward \emph{differences} between paired completions, generalizing NPG with strong agnostic regret bounds.
Both REBEL and TPO construct a target from rewards and the behavior policy, but differ in loss and structure: REBEL uses squared loss on log-probability ratios over pairs; TPO uses cross-entropy on a distribution over a candidate group.
PMPO~\citep{abdolmaleki2025pmpo} is the closest target-matching method: it partitions candidates into accepted/rejected sets and regularizes toward a frozen $\pi_{\text{ref}}$, whereas TPO keeps a single soft target over the full group and anchors only to $\pi_{\text{old}}$.
Offline pairwise methods (DPO~\citep{rafailov2023dpo}, KTO~\citep{ethayarajh2024kto}, IPO~\citep{azar2024ipo}) are more distant, as TPO is online, setwise, and scorer-agnostic.

\textbf{Objective-level corrections.}
MaxRL~\citep{tajwar2026maxrl} changes \emph{which} objective is optimized (higher-order corrections under binary rewards).
GDPO~\citep{liu2026gdpogrouprewarddecouplednormalization} and MT-GRPO~\citep{ramesh2026multitaskgrporeliablellm} correct GRPO's objective for multi-reward and multi-task settings, respectively: GDPO decouples per-reward normalization to prevent advantage collapse, while MT-GRPO introduces robustness-aware task reweighting.
TPO is orthogonal, changing \emph{how} within-context signals become updates; all four corrections are complementary (see Section~\ref{sec:discussion}).

\textbf{Off-policy and asynchronous training.}
Large-scale RL pipelines decouple rollout generation from parameter updates, introducing staleness and engine mismatch.
ScaleRL~\citep{khatri2025artscalingreinforcementlearning} systematically studies this regime, showing that the degree of off-policy-ness modulates compute efficiency without shifting the asymptotic performance ceiling, and proposes truncated importance sampling to stabilize training.
IcePop~\citep{lingteam2025stepevolvesscalingreinforcement} addresses a distinct source of off-policy-ness: probability discrepancies between inference and training engines (especially in MoE models), which compound across iterations; it masks token-level gradients whose engine-ratio falls outside a calibrated window.
TPO's within-context correction is orthogonal to both and can be composed with either off-policy strategy.

\begin{table*}[t]
\caption{\textbf{Comparison of policy optimization methods.}
Group: update structurally compares candidates within a context. Fixed ref.: requires a frozen reference model beyond $\pi_{\text{old}}$.}
\label{tab:method-comparison}
\centering
\small
\setlength{\tabcolsep}{10pt}
\begin{tabular}{@{}llccc@{}}
\toprule
\textbf{Method} & \textbf{Update rule} & \textbf{Group} & \textbf{Critic} & \textbf{Fixed ref.} \\
\midrule
REINFORCE & PG + baseline & \ding{55} & \ding{55} & \ding{55} \\
REINFORCE++ & PG + per-token KL baseline & \ding{55} & \ding{55} & \ding{51} \\
ReMax & PG + greedy baseline & \ding{55} & \ding{55} & \ding{51} \\
TRPO & PG + KL trust region & \ding{55} & \ding{55} & \ding{55} \\
PPO & Clipped PG surrogate & \ding{55} & \ding{55} & \ding{55} \\
DG & Sigmoid-gated PG & \ding{55} & \ding{55} & \ding{55} \\
MDPO & PG + mirror-descent KL & \ding{55} & \ding{55} & \ding{55} \\
\midrule
RLOO & PG + leave-one-out baseline & \ding{51} & \ding{55} & \ding{55} \\
GRPO & Clipped PG + group adv. & \ding{51} & \ding{55} & \ding{55} \\
REBEL & Sq.\ loss on reward diffs & \ding{51} & \ding{55} & \ding{55} \\
PMPO & Weighted lik.\ + KL to $\pi_{\text{ref}}$ & \ding{51} & \ding{55} & \ding{51} \\
AWR & Regress to $\exp(A/\beta)$ weights & \ding{55} & \ding{51} & \ding{55} \\
MPO / V-MPO & $q{\propto}\pi_{\text{old}}\exp(\text{signal}/\eta)$; fit $\pi{\to}q$ & \ding{51} & \ding{51} & \ding{55} \\
MaxRL & Higher-order obj.\ correction & \ding{51} & \ding{55} & \ding{55} \\
\midrule
\textbf{TPO} & $q{\propto}p_{\text{old}}\exp(u)$; CE to $q$ & \ding{51} & \ding{55} & \ding{55} \\
\bottomrule
\end{tabular}
\end{table*}

\section{Limitations}
\label{sec:discussion}

\textbf{Candidate quality and group-based costs.}
TPO can only redistribute probability over the candidates it is given. If the sampled set is low-diversity or uniformly poor, the target is correspondingly uninformative. In discrete-action settings where all actions can be scored in a single forward pass, the $K$-candidate group adds no extra environment interactions; in sequence settings without a critic, TPO requires $K$ rollouts per context just as GRPO does. It may use those rollouts better, but does not remove the cost. More aggressive rollout reuse would move TPO into a genuinely off-policy regime, where Retrace- or V-trace-style corrections may become necessary~\citep{munos2016safe,espeholt2018impala}.

\textbf{Score standardization is helpful but not free.}
Standardizing scores gives TPO a stable scale across tasks and largely removes the need to tune temperature, but it can also amplify small numerical differences when the within-group variance is tiny. For instance, a group where one candidate scores $0.001$ and the rest score $0$ produces a very sharp target after $z$-scoring. This is the same difficulty-bias mechanism identified for GRPO~\citep{liu2025drgrpo,murphy2025reinforcementlearningoverview}. A more robust treatment of low-variance groups would help in practice.

\textbf{Scale of evaluation.}
Our LLM-scale experiments (Section~\ref{sec:exp-llm-rlvr}) use 1.5--1.7B parameter models on three tasks. Testing on larger models (7B+) and harder benchmarks (MATH, AIME) remains future work; the main open question is whether TPO's relative gains persist at larger scale.

\section{Conclusion}
\label{sec:conclusion}

TPO replaces scalar-weighted policy gradients with a single design choice: build a target distribution on the scored candidate set and fit the policy to it by cross-entropy. Across every setting we tested (tabular bandits, neural bandits, dense- and sparse-reward transformers, and billion-parameter LLM RLVR), TPO matches PG, PPO, DG, and GRPO on dense-reward tasks and substantially outperforms them under sparse reward. Separating \emph{what} redistribution is desired from \emph{how} the optimizer realizes can make the update more robust. We plan to test TPO on larger models.

\begin{ack}
We thank Srijan Patel, Zhengyao Jiang, and Ian Osband for discussions and feedback.

\end{ack}

\bibliographystyle{plainnat}
\bibliography{references}

\appendix

\section{Score standardization}
\label{app:whitening}

Given raw scores $s_1, \dots, s_K$, the standardized scores used throughout the paper are
\begin{equation}
\label{eq:skill-white}
u_i =
\begin{cases}
\dfrac{s_i - \bar{s}}{\sigma(s)} & \text{if } \sigma(s) > 0, \\
0 & \text{if } \sigma(s) = 0,
\end{cases}
\end{equation}
where $\bar{s} = \frac{1}{K}\sum_j s_j$ and
\[
\sigma(s)=\sqrt{\frac{1}{K}\sum_{j=1}^K (s_j-\bar{s})^2}
\]
is the within-group \emph{population} standard deviation. Equivalently, $u_i$ is the within-group z-score of $s_i$, with the convention that every coordinate is set to zero when $\sigma(s)=0$.

\section{Multi-context tabular weighting derivation}
\label{app:tabular-multi-weight}

This appendix derives the effective per-context coefficients for the multi-context tabular bandit in Section~\ref{sec:exp-tabular-multi}. The derivation is in \emph{logit space}, matching the experiment and \citet{osband2026dg}: the policy in each context is a softmax over explicit logits. To avoid overloading $K$ from the main text, let $A$ denote the number of actions in this bandit. In context $n$, let $y_n$ be the correct action, $\pi_n$ the current policy over $A$ actions, $p_n=\pi_n(y_n)$, and $e_{y_n}$ the one-hot vector for the correct action. Define
\[
v_n = e_{y_n} - \pi_n.
\]
Because the reward is one-hot, all exact updates considered in the multi-context experiment lie along this same direction; the methods differ only in the scalar coefficient multiplying $v_n$.
For TPO, this scalar form appears only after first constructing the target $q_n$ and then simplifying the target-matching update $q_n-\pi_n$ in this special one-hot setting.

\paragraph{CE.}
The cross-entropy oracle uses
\[
g_n^{\mathrm{CE}} = e_{y_n} - \pi_n = v_n,
\]
so
\[
\beta_{\mathrm{CE}}(p_n)=1.
\]

\paragraph{DG.}
In the exact population limit, with baseline $b{=}0$, DG contributes
\[
g_n^{\mathrm{DG}}
= \sum_a \pi_n(a)\, r_n(a)\, \sigma\!\left(\frac{r_n(a)(-\log \pi_n(a))}{\eta}\right)\, (e_a-\pi_n),
\]
where $r_n(a)=\mathbf{1}\{a=y_n\}$. Since only the correct action has nonzero reward,
\[
g_n^{\mathrm{DG}}
= p_n \, \sigma\!\left(\frac{-\log p_n}{\eta}\right)\, (e_{y_n}-\pi_n).
\]
Therefore
\[
\beta_{\mathrm{DG}}(p_n)=p_n \sigma\!\left(\frac{-\log p_n}{\eta}\right).
\]
With the default $\eta=1$ used in the experiment,
\[
\beta_{\mathrm{DG}}(p_n)=\frac{p_n}{1+p_n}.
\]

\paragraph{GRPO.}
Within context $n$, rewards are Bernoulli with mean $p_n$ and standard deviation
\[
\sigma_n=\sqrt{p_n(1-p_n)}.
\]
Standardizing rewards gives advantage
\[
A_n(y_n)=\frac{1-p_n}{\sigma_n},
\qquad
A_n(a\neq y_n)=\frac{-p_n}{\sigma_n}.
\]
The exact population GRPO update is
\[
g_n^{\mathrm{GRPO}}
= \sum_a \pi_n(a)\, A_n(a)\, (e_a-\pi_n).
\]
Substituting the two advantage values and simplifying yields
\[
g_n^{\mathrm{GRPO}}
= \frac{p_n}{\sigma_n}\, (e_{y_n}-\pi_n)
= \sqrt{\frac{p_n}{1-p_n}}\, (e_{y_n}-\pi_n),
\]
so
\[
\beta_{\mathrm{GRPO}}(p_n)=\sqrt{\frac{p_n}{1-p_n}}.
\]

\paragraph{TPO.}
For the one-hot score vector $s=e_{y_n}$, the mean is $\bar{s}=1/A$ and the population standard deviation is $\sigma(s)=\sqrt{A-1}/A$. Using Eq.~\ref{eq:skill-white},
\[
u_{y_n}
= \frac{1-1/A}{\sqrt{A-1}/A}
= \sqrt{A-1},
\qquad
u_{a\neq y_n}
= \frac{-1/A}{\sqrt{A-1}/A}
= -\frac{1}{\sqrt{A-1}}.
\]
For the $A{=}10$ experiment, this is the z-score of $(1,0,\dots,0)$: $\bar{s}=1/10$ and
\[
\sigma(s)^2
= \frac{1}{10}\left(1-\frac{1}{10}\right)^2
+ \frac{9}{10}\left(0-\frac{1}{10}\right)^2
= \frac{9}{100},
\qquad
\sigma(s)=\frac{3}{10}.
\]
Hence
\[
u_{y_n}=\frac{1-1/10}{3/10}=3,
\qquad
u_{a\neq y_n}=\frac{0-1/10}{3/10}=-\frac{1}{3}.
\]
TPO still starts by forming the target
\[
q_n(a) \propto \pi_n(a)\exp(u_a),
\]
which multiplies the correct-vs.-incorrect odds by a fixed factor
\[
\lambda
= \exp\!\left(u_{y_n}-u_{a\neq y_n}\right)
= \exp\!\left(\frac{A}{\sqrt{A-1}}\right).
\]
For $A=10$, therefore, $\lambda=\exp(10/3)\approx 28$. The TPO target therefore satisfies
\[
q_n(y_n)=\frac{\lambda p_n}{1-p_n+\lambda p_n},
\qquad
q_n(a\neq y_n)=\frac{\pi_n(a)}{1-p_n+\lambda p_n}.
\]
The TPO loss gradient is $\pi_n-q_n$, so the corresponding gradient-descent update direction is $g_n^{\mathrm{TPO}}=q_n-\pi_n$. This simplifies to
\[
g_n^{\mathrm{TPO}}
= \frac{p_n(\lambda-1)}{1-p_n+\lambda p_n}\, (e_{y_n}-\pi_n).
\]
Thus
\[
\beta_{\mathrm{TPO}}(p_n)=\frac{p_n(\lambda-1)}{1-p_n+\lambda p_n}.
\]
In other words, $\beta_{\mathrm{TPO}}$ is not a different definition of TPO; it is the closed-form coefficient obtained after eliminating $q_n$ from the update $q_n-\pi_n$ in this tabular one-hot case.

\paragraph{Interpretation.}
All four updates share the same within-context direction $e_{y_n}-\pi_n$ and differ only in their cross-context weight $\beta(p_n)$. CE weights every context equally. DG and GRPO place relatively more weight on contexts with larger $p_n$, so under a normalized step they spend more update budget on already-easy contexts. TPO's coefficient is much flatter in $p_n$ and therefore closer to CE's equal-weight allocation. For example, at $p_n=0.1$ and $A=10$, $\beta_{\mathrm{TPO}}=0.73$, versus $0.09$ for DG and $0.33$ for GRPO.

\section{MNIST single-example logit updates}
\label{app:mnist-logit}

This appendix derives the expected logit-space updates for the MNIST contextual bandit in Section~\ref{sec:exp-discrete}, showing what information each loss preserves from a single bandit sample. Consider one labeled example $(x,y)$ with logits $z$, policy $\pi=\pi(\cdot\mid x)$ over the 10 classes, correct-class probability $p=\pi_y$, and one-hot basis vectors $e_i$. The supervised cross-entropy direction on this example is
\[
v = e_y - \pi.
\]
All expectations below are over the sampled action $a \sim \pi(\cdot\mid x)$. Throughout this appendix, $g$ denotes the \emph{gradient-descent update direction} in logit space, i.e.\ the negative of the loss gradient. These are the directions induced by the implemented surrogate losses, with scalar coefficients such as baselines, standardized rewards, gates, and target distributions treated as stop-gradient constants exactly as in the code.

\paragraph{PG.}
The MNIST baseline is
\[
b = \sum_{i=1}^{10} \pi_i^2,
\]
so the per-sample advantage is $A(a)=\mathbf{1}\{a=y\}-b$. The expected REINFORCE update is
\[
g^{\mathrm{PG}}
= \mathbb{E}\!\left[A(a)\,(e_a-\pi)\right]
= p(e_y-\pi)
= p\,v.
\]
The baseline term disappears because $\mathbb{E}[e_a-\pi]=0$.

\paragraph{Single-sample GRPO.}
In the implemented MNIST variant, rewards are standardized across the minibatch:
\[
A_B(a)=\frac{\mathbf{1}\{a=y\}-\mu_B}{\sigma_B},
\]
where $\mu_B$ and $\sigma_B$ are the minibatch reward mean and standard deviation. Conditioning on the realized minibatch statistics $(\mu_B,\sigma_B)$ for one example, the expected update is
\[
g^{\mathrm{GRPO}\mid \mu_B,\sigma_B}
= \mathbb{E}\!\left[A_B(a)\,(e_a-\pi)\right]
= \frac{p}{\sigma_B}(e_y-\pi)
= \frac{p}{\sigma_B}\,v.
\]
Thus this single-sample MNIST variant is REINFORCE with batch-standardized rewards: the exact minibatch update couples examples through $\mu_B$ and $\sigma_B$, but it introduces no new within-example geometry.

\paragraph{DG.}
DG uses the same advantage $A(a)=\mathbf{1}\{a=y\}-b$ but gates it by surprisal. Since
\[
A(y)=1-b,
\qquad
A(j\neq y)=-b,
\]
the exact expected logit update is
\[
g^{\mathrm{DG}}
= p(1-b)\,\sigma\!\left(\frac{(1-b)\log(1/p)}{\eta}\right)(e_y-\pi)
{}- b\sum_{j\neq y}\pi_j\,
\sigma\!\left(-\frac{b\log(1/\pi_j)}{\eta}\right)(e_j-\pi).
\]
In general this need not be collinear with $v$: the update depends on how probability mass is distributed across the wrong classes. Under the symmetric one-vs-rest approximation $\pi_j=q=(1-p)/9$ for all $j\neq y$, it collapses to
\[
g^{\mathrm{DG}} = \beta_{\mathrm{DG}}^{\mathrm{sym}}(p)\,v,
\]
with
\[
\beta_{\mathrm{DG}}^{\mathrm{sym}}(p)
= p(1-b)\,\sigma\!\left(\frac{(1-b)\log(1/p)}{\eta}\right)
 + p b\,\sigma\!\left(-\frac{b\log(1/q)}{\eta}\right),
\]
where $b = p^2 + 9q^2$.

\paragraph{TPO.}
TPO builds a target from the sampled action. The sampled score vector is
\[
s = A(a)\,e_a.
\]
Because $s$ has exactly one nonzero coordinate, z-scoring over $K=10$ classes maps a positive sample to $u_a=3$ and $u_{i\neq a}=-1/3$, and a negative sample to the sign-flipped pattern. After standardization, only the sign of $A(a)$ matters. Define
\[
\lambda = \exp\!\left(\frac{10}{3}\right) \approx 28,
\]
the corresponding correct-vs-incorrect reweighting factor for $K=10$ classes, since $\lambda=\exp(3-(-1/3))$.

If the sampled action is correct ($a=y$), the target is
\[
q_i^{+} \propto \pi_i \exp(u_i^{+}),
\]
with $u_y^{+}=3$ and $u_{j\neq y}^{+}=-1/3$. This gives
\[
q_y^{+}=\frac{\lambda p}{1-p+\lambda p},
\qquad
q_{j\neq y}^{+}=\frac{\pi_j}{1-p+\lambda p},
\]
so the induced logit update is
\[
g^{+}=q^{+}-\pi
= \beta_{+}(p)\,(e_y-\pi),
\qquad
\beta_{+}(p)=\frac{p(\lambda-1)}{1-p+\lambda p}.
\]

If the sampled action is an incorrect class $j\neq y$, standardization flips sign: the sampled wrong class receives $u_j^-=-3$ and every other class receives $u_i^-=1/3$. The target is then
\[
q_j^{-}=\frac{\pi_j}{\lambda(1-\pi_j)+\pi_j},
\qquad
q_{i\neq j}^{-}=\frac{\lambda\pi_i}{\lambda(1-\pi_j)+\pi_j},
\]
which yields
\[
g^{-(j)}=q^{-}-\pi
= \gamma(\pi_j)\,(\pi-e_j),
\qquad
\gamma(r)=\frac{r(\lambda-1)}{\lambda(1-r)+r}.
\]
Taking expectation over the sampled action gives
\[
g^{\mathrm{TPO}}
= p\,g^{+} + \sum_{j\neq y}\pi_j\,g^{-(j)}.
\]
Unlike PG, GRPO, and DG, a success pulls directly toward the label while a failure directly suppresses the sampled wrong class, redistributing that mass across the remaining logits.

Under the symmetric one-vs-rest approximation $\pi_j=q$ for all $j\neq y$, TPO also collapses to a scalar multiple of $v$:
\[
g^{\mathrm{TPO}} = \beta_{\mathrm{TPO}}^{\mathrm{sym}}(p)\,v,
\qquad
\beta_{\mathrm{TPO}}^{\mathrm{sym}}(p)
= p\,\beta_{+}(p) + p\,\gamma(q).
\]

\paragraph{Group PG.}
Our same-signal scalar ablation keeps the same sampled score vector $s=A(a)e_a$ as TPO, but replaces target matching with scalar-weighted REINFORCE using the sampled standardized score $u_a$. For $K=10$, the sampled coordinate has standardized value $u_a=3$ when $a=y$ and $u_a=-3$ when $a\neq y$; the unsampled coordinates do not enter the scalar-weighted loss. Therefore
\[
g^{\mathrm{GroupPG}}
= 3p(e_y-\pi) - 3\sum_{j\neq y}\pi_j(e_j-\pi)
= 6p(e_y-\pi)
= 6g^{\mathrm{PG}}.
\]
Thus Group PG holds the sampled signal fixed but discards TPO's target structure; in expectation it collapses back to a rescaled one-vs-rest PG update.

\paragraph{Interpretation.}
The derivation isolates what information survives from a single bandit sample. PG, conditional single-sample GRPO, and the same-signal scalar ablation Group PG all reduce to one-vs-rest directions in expectation, so they only preserve a scalar ``correct versus incorrect'' signal. DG and TPO condition on the sampled action, so in general they depend on the detailed distribution of wrong-class mass. When the wrong classes are nearly symmetric, both reduce to scalar multiples of $e_y-\pi$. Away from that limit, TPO retains a particularly useful failure update: it explicitly suppresses the sampled wrong class and redistributes that mass elsewhere. Therefore TPO should help most when the model's mistakes are concentrated on one or a few confusing alternatives, and least when the wrong-class mass is diffuse. Section~\ref{sec:exp-discrete} tests exactly this prediction.

\section{Temperature robustness}
\label{app:temperature}

Score standardization sets an effective temperature of $\eta = 1$: the target distribution becomes $q_i \propto p_i^{\text{old}} \cdot \exp(u_i / \eta)$ with $\eta = 1$. To test sensitivity, we sweep $\eta \in \{0.25, 0.5, 1, 2, 4\}$ on the token reversal task (reverse copy, $V{=}2$, $H{=}10$, $B{=}100$, $K{=}8$, 10 seeds).

\begin{figure}[h]
\centering
\includegraphics[width=0.55\textwidth]{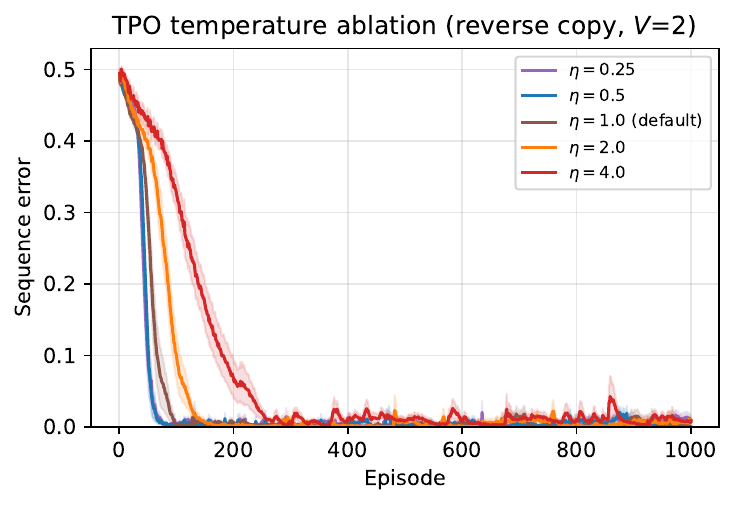}
\caption{\textbf{TPO temperature ablation.} All values in $[0.25, 2]$ converge within 141 episodes; only $\eta{=}4$ is meaningfully slower. Performance is robust across a 16$\times$ range.}
\label{fig:eta_sweep}
\end{figure}

Table~\ref{tab:eta_sweep} reports steps to 1\% error. All values from 0.25 to 2.0 reach 1\% within 141 episodes; only $\eta{=}4$ degrades substantially. The default $\eta{=}1$ sits in the middle of a wide basin of good performance, consistent with the finding of \citet{osband2026dg}, who independently report that $\eta{=}1$ is robust for both DG and MPO across MNIST and DM Control.

\begin{table}[h]
  \caption{\textbf{TPO temperature ablation} on token reversal (reverse copy, $V{=}2$). Performance is robust across a wide range; only $\eta{=}4$ degrades substantially.}
  \label{tab:eta_sweep}
  \centering
  \small
  \begin{tabular}{lccc}
    \toprule
    $\eta$ & Final error (\%) & Steps to 1\% \\
    \midrule
    0.25 & 1.0 & 72 \\
    0.50 & 0.0 & 67 \\
    1.00 (default) & 0.7 & 96 \\
    2.00 & 1.0 & 141 \\
    4.00 & 0.8 & 260 \\
    \bottomrule
  \end{tabular}
\end{table}

\section{Multi-epoch DG instability}
\label{app:dg-multiepoch}

PPO, GRPO, and TPO all include mechanisms that limit or anchor the policy relative to the rollout-time or reference policy: PPO clips the importance-weight ratio, GRPO adds a KL penalty, and TPO fits an explicit target distribution whose construction is KL-anchored to $p^{\text{old}}$ (cf.\ the EM control M-step in MPO~\citep{abdolmaleki2018mpo}). In our experiments, these stabilizing mechanisms made multi-epoch reuse substantially more stable than DG and improved data extraction from each rollout batch.

DG lacks such a constraint because it is explicitly designed as a ``drop-in replacement for standard policy gradients that requires no importance ratios''~\citep{osband2026dg}, modulating gradient magnitude via sigmoid gating but not bounding the per-step policy shift. When we rerun DG with the same 4 gradient epochs used by PPO, GRPO, and TPO, the behavior becomes highly sensitive to epoch count. On a reverse-copy transformer RLVR benchmark with terminal reward, 4-epoch DG finishes at 48.3\% error versus 2.0\% for the standard 1-epoch update (Figure~\ref{fig:dg-multiepoch}(a)). Across the eight prompt-matched token-reversal variants from Section~\ref{sec:exp-variations}, 4-epoch DG is worse in 7 of 8 settings (Figure~\ref{fig:dg-multiepoch}(b,c)), with the largest regressions on the sequential tasks: flip rises from 0.07\% to 4.56\% and reverse flip from 0.00\% to 0.82\%. The only exception is reverse copy with sequential reward, where 4 epochs improves slightly (0.35\% to 0.05\%).

\begin{figure*}[t]
\centering
\includegraphics[width=\textwidth]{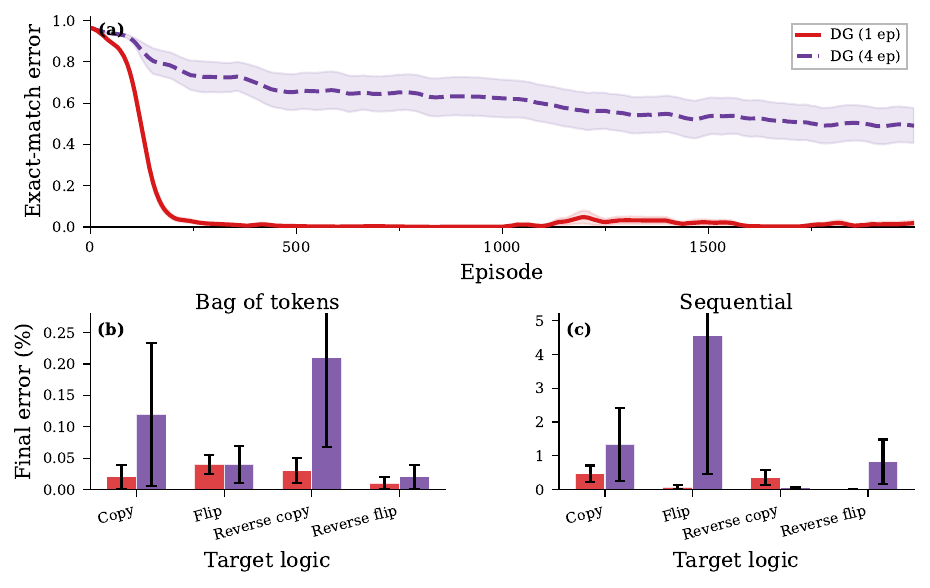}
\caption{\textbf{DG epoch sensitivity across sparse- and dense-reward transformer tasks.} (a) Reverse-copy transformer RLVR with terminal reward, 20 seeds: reusing each rollout batch for 4 DG gradient epochs keeps the error high (48.3\% final) while the standard 1-epoch DG update reaches 2.0\%. (b,c) Final error on the eight prompt-matched token-reversal variants from Section~\ref{sec:exp-variations} ($H{=}10$, $V{=}2$, $K{=}8$ token candidates, 10 seeds), split by reward type. DG with 4 epochs is worse in 7 of 8 settings, with the largest regressions on the sequential tasks. Shading and error bars show $\pm 1$ s.e.}
\label{fig:dg-multiepoch}
\end{figure*}

We therefore run DG with a single gradient epoch per rollout batch throughout all experiments. This is the most favorable setting for DG and is consistent with \citet{osband2026dg}, who use DG as a single-step on-policy update throughout their experiments.

\section{GRPO baseline configuration}
\label{app:grpo-baseline}

Our GRPO baseline uses the standard PPO-style clipped surrogate with group-relative ($z$-scored) advantages~\citep{shao2024deepseekmath}, augmented with a reverse-KL penalty ($\beta{=}0.04$) to the rollout policy. In the original DeepSeekMath setup this KL is taken to a reference policy (e.g.\ the SFT checkpoint), while iterative GRPO variants can also use the current policy as the reference; in our controlled experiments, which train from scratch with no separate reference model, we therefore penalize divergence from the rollout snapshot.

This is a deliberate strengthening of the baseline: removing the KL term ($\beta{=}0$) causes GRPO to collapse under sparse terminal reward, with error increasing over training rather than decreasing (Section~\ref{sec:exp-rlvr}, Table~\ref{tab:rlvr}). The KL penalty stabilizes multi-epoch reuse by preventing the policy from drifting too far from the data that generated the advantages, a role that TPO's cross-entropy-to-target objective fulfills structurally without requiring an explicit penalty.

\section{LLM RLVR implementation details}
\label{app:llm-rlvr-details}

All LLM RLVR experiments use the \texttt{verl} stack~\citep{sheng2024verl} with AdamW at learning rate $10^{-5}$, batch size~16, and $4\times$A100-80GB GPUs. GSM8K uses exact-match rewards; graph coloring uses quasi-binary native task scores; Knights \& Knaves uses partial-credit scores. For GSM8K we add LoRA (rank~32) and a KL penalty ($\lambda_{\text{KL}} = 10^{-3}$) to both TPO and GRPO. The paired runs are otherwise identical, differing only in the policy loss: TPO uses Eq.~\ref{eq:loss}; GRPO uses the clipped surrogate with $z$-scored advantages.

\end{document}